\documentclass[letterpaper]{article}
\usepackage[preprint]{aaai2027}
\usepackage[hyphens]{url}
\usepackage{graphicx}
\usepackage{natbib}
\usepackage{bibunits}
\usepackage{caption}
\usepackage{booktabs}
\usepackage{tabularx}
\usepackage{amsmath}
\usepackage{amssymb}
\usepackage{amsthm}
\usepackage{nicefrac}
\usepackage{algorithm}
\usepackage[noend]{algpseudocode}

\newtheorem{theorem}{Theorem}
\newtheorem{proposition}{Proposition}
\newtheorem{lemma}{Lemma}
\newtheorem{corollary}{Corollary}
\theoremstyle{remark}
\newtheorem{remark}{Remark}

\providecommand{\Description}[1]{}
\providecommand{\texorpdfstring}[2]{#1}

\newcommand{\nrm}{\mathrm{norm}}
\newcommand{\lamn}{\lambda_{\nrm}}
\newcommand{\bdev}{\mathrm{burn\_dev}}
\DeclareRobustCommand{\padct}{PA\nobreakdash-DCT}
\newcommand{\rt}{r_t}
\newcommand{\ut}{u_t}
\newcommand{\ctilde}{\tilde c_t}
\newcommand{\thetaq}[2]{\theta_q^{#1,#2}}
\newcommand{\thetac}[2]{\theta_c^{#1,#2}}
\newcommand{\Sigmaq}[2]{\Sigma_q^{#1,#2}}
\newcommand{\Sigmac}[2]{\Sigma_c^{#1,#2}}
\newcommand{\hatu}[2]{\hat u^{#1,#2}}
\newcommand{\hatc}[2]{\hat c^{#1,#2}}
\newcommand{\Reg}{\mathrm{PA\text{-}Reg}}
\newcommand{\paReg}{\mathrm{PA\text{-}Reg}}

\title{402Pilot: An x402 Decision Layer for Autonomous Agent Micropayments}

\author{
Yin Li\textsuperscript{\rm 1},
Yanbo He\textsuperscript{\rm 1},
Boo-Ho Yang\textsuperscript{\rm 2},
Rav Lawana\textsuperscript{\rm 3},\\
Ziyue Li\textsuperscript{\rm 4},
Wei Zeng\textsuperscript{\rm 1},
Jing Tang\textsuperscript{\rm 1},
Fugee Tsung\textsuperscript{\rm 1,\rm 5}\corresponding
}

\affiliations{
\textsuperscript{\rm 1}The Hong Kong University of Science and Technology (Guangzhou), Guangzhou, China\\
\textsuperscript{\rm 2}MOVENSYS Inc., Seongnam-si, Republic of Korea\\
\textsuperscript{\rm 3}Schneider Electric, Shanghai, China\\
\textsuperscript{\rm 4}Technical University of Munich, Munich, Germany\\
\textsuperscript{\rm 5}The Hong Kong University of Science and Technology, Hong Kong, China\\
yligt@connect.hkust-gz.edu.cn,
yhe376@connect.hkust-gz.edu.cn,
byang@movensys.com,\\
rav.lawana@se.com,
ziyue.li@tum.de,
weizeng@hkust-gz.edu.cn,
jingtang@hkust-gz.edu.cn,\\
season@ust.hk
}

\begin{document}

\maketitle

\begin{abstract}
Programmable-payment protocols such as x402 enable per-request micropayments,
but they do not determine which payable service an autonomous agent should buy
under a finite wallet. We formulate this buyer-side problem as
\emph{agent-native payment decision-making}: contextual provider selection under
wallet pressure, chosen-only paid feedback, and changing market conditions. We
propose 402Pilot, a protocol-agnostic buyer-side decision layer between
autonomous agents and payment execution that implements purchasing policies for
selecting among payable providers. We instantiate it with PA-DCT, a
payment-aware discounted contextual Thompson-sampling policy that adapts
purchasing decisions under wallet pressure while learning from post-payment
feedback. To evaluate buyer-side payment policies, we introduce 402Pilot-Bench,
a frozen-replay benchmark spanning 823 tasks, five heterogeneous provider
pipelines, and three market regimes, each evaluated over 30 paired seeds. PA-DCT
achieves the strongest fixed-wallet adaptive trade-off among non-oracle
policies: it maintains competitive service quality while spending only
39--43\% of the wallet and reallocates spending as market conditions change. It
attains the best non-oracle PA-gap/$T$ under the price shock and the best mean
and worst-case ranks across the nine scenario--metric combinations of quality,
ROI, and PA-gap/$T$. Comparisons with learning baselines and component ablations
further support the effectiveness and design of the proposed decision policy.
These results suggest that programmable payment must be complemented by
buyer-side decision-making capable of learning service value and adapting
purchasing decisions accordingly.

\end{abstract}

\begin{links}
  \link{Code and data}{https://github.com/MCCodeAI/402Pilot}
\end{links}

\begin{bibunit}[aaai2027]
\section{Introduction}
\label{sec:intro}

The rapid development of x402 brings protocol capabilities that traditional payment flows do not support: machine-readable payment requirements, per-request micropayments, and low-friction settlement. With further advances such as x402 V2~\citep{coinbase2025x402v2}, additional capabilities---service discovery, dynamic service metadata, and pricing---are exposed to agents through programmatic interfaces, making buyer-side service selection an immediate practical problem.

Several lines of prior work supply pieces of this problem, but not the full buyer-side setting. Budgeted, contextual, and non-stationary bandits offer tools for learning under resource constraints, task-dependent conditions, and changing environments~\citep{badanidiyuru2018bandits,agrawal2013contextualts,qi2023dsts}. Cost-aware LLM routing similarly considers quality--cost trade-offs, but typically does not model repeated purchasing decisions under a persistent wallet and post-payment feedback~\citep{chen2023frugalgpt,ong2024routellm}. These directions do not jointly capture a protocol-mediated paid-feedback loop in which the buyer selects and pays for one service, then learns only from the outcome and cost of that selection. Payment protocols address execution, while auction mechanisms study strategic provider bidding~\citep{patra2026reverseauction}; neither provides an online spending policy for a single autonomous buyer. Existing work therefore does not jointly address the buyer-side requirements of agent-operated micropayments.

We refer to this buyer-side setting as agent-native payment decision-making: an autonomous agent repeatedly makes irreversible purchasing decisions among payable providers under task context and wallet pressure, learns from the performance and cost of each selected service, and adapts to market changes. Unlike human-operated selection, this process must run continuously without human resetting, retuning, or comparison of unchosen outcomes. Micropayments also make repeated low-cost purchases possible, allowing the agent to accumulate feedback over many fine-grained decisions. These properties make autonomous micropayment an online economic decision problem, not only a settlement problem.

We introduce 402Pilot as a protocol-agnostic buyer-side decision layer between the agent and the payment stack. For each request, it derives the task context and obtains the affordable provider set, then applies a buyer policy to select a provider. 402Pilot forwards the selection for payment execution and converts the resulting service outcome and receipt into learning feedback.

We instantiate this policy as PA-DCT (Payment-Aware Discounted Contextual Thompson Sampling), a contextual bandit that maintains discounted posteriors over service utility and realized cost. It uses Thompson sampling for provider selection under wallet pressure and discounts older observations to adapt as market conditions change.

We evaluate buyer-side policies on \emph{402Pilot-Bench}, a reproducible frozen-replay benchmark with $823$ tasks, five heterogeneous provider pipelines, and stationary, reliability-shock, and price-shock scenarios, each evaluated over $30$ paired seeds. Among the non-oracle policies, PA-DCT delivers the strongest fixed-wallet adaptive trade-off: it maintains competitive quality while using only $39$--$43\%$ of the available wallet across all three scenarios and reallocates spending appropriately as reliability and prices change. Against both non-learning policies and learning baselines, it attains the best mean rank ($3.3/10$) and best worst-case rank ($6/10$) across the nine scenario--metric comparisons of quality, ROI, and PA-gap/$T$. Ablations further clarify how PA-DCT's components contribute to solvency, adaptation, and robustness.

We make three contributions.

\textbf{C1.} We formulate \emph{agent-native payment decision-making}, a new buyer-side online decision problem for autonomous micropayment agents, and introduce \emph{402Pilot} as the missing decision layer above payment execution (\S\ref{sec:problem}--\S\ref{sec:layer}).

\textbf{C2.} We instantiate 402Pilot with PA-DCT, a payment-aware discounted contextual Thompson-sampling policy that jointly handles wallet pressure, task context, and reliability and price drift, yielding adaptive provider selection under market change (\S\ref{sec:method}).

\textbf{C3.} We introduce \emph{402Pilot-Bench} for reproducible evaluation of buyer-side payment policies used by autonomous agents. Its frozen-replay design supports comparisons under paid feedback, finite budgets, and market shocks (\S\ref{sec:eval}).

\section{Related Work}
\label{sec:related-work}

\paragraph{Programmable payment protocols and discovery.}
Programmable payment protocols expose the execution substrate for agent payments. x402 embeds per-request payment into HTTP, and its V2 release adds discovery and service metadata for payable APIs~\citep{coinbase2025x402v2}. AP2 provides a payment-agnostic authorization framework for agent-led commerce~\citep{google2025ap2}; A2A--x402 brings x402-style cryptocurrency payments into the Agent-to-Agent protocol~\citep{a2ax402}; and A402 binds cryptocurrency payment to service execution through atomic service channels~\citep{a402}. A recent SoK organizes blockchain-based agent payments around discovery, authorization, execution, and accounting~\citep{sok2026agentpayments}. These systems make payable services discoverable and executable, but leave the buyer's spending policy outside the protocol. 402Pilot takes that policy as its object of study.

\paragraph{Budgeted, contextual, and non-stationary bandits.}
Classical bandit methods provide relevant algorithmic tools for this setting. Budgeted Thompson Sampling samples reward and cost posteriors under a finite budget~\citep{xia2015bts}, while bandits with knapsacks give a broader constrained-resource formulation~\citep{badanidiyuru2018bandits}. Contextual bandits estimate action value from task features, with Contextual Thompson Sampling and LinUCB as canonical examples~\citep{agrawal2013contextualts,li2010linucb}. Contextual bandits with knapsacks combine context with resource constraints~\citep{agrawal2016linearcbwk}. For non-stationary environments, discounted and sliding-window methods discount or window old observations~\citep{garivier2011discounted,trovo2020sliding,qi2023dsts}, and weighted linear bandits extend this idea to drifting contextual rewards~\citep{russac2019weighted}. However, none of these methods combines budget awareness, task context, and drift adaptation within a protocol-mediated paid-feedback loop.

\paragraph{Cost-aware LLM routing and provider selection.}
FrugalGPT uses cascades to reduce inference cost while preserving quality, while RouteLLM, Hybrid LLM, AutoMix, MixLLM, and LLM Bandit learn prompt-to-model assignments from quality, preference, cost, or latency signals~\citep{chen2023frugalgpt,ong2024routellm,ding2024hybridllm,madaan2024automix,mixllm2025,llmbandit2025}. These methods typically operate within a fixed serving stack with known candidates and listed prices. 402Pilot instead studies wallet-constrained endpoint selection with chosen-only feedback on realized outcomes and costs. Reverse-auction work considers strategic provider bidding~\citep{patra2026reverseauction}, whereas 402Pilot assumes posted prices and focuses on a single buyer's online spending policy.

\section{Buyer-Side Decision Problem}
\label{sec:problem}

We formalize buyer-side payment decision-making as a sequential
provider-selection problem in which an autonomous buyer must allocate a finite
wallet across competing services.

\subsection{Decision Setting}
\label{sec:decision-setting}

A single autonomous buyer interacts with $K$ payable providers over a horizon
of $T$ rounds, starting with wallet balance $B_0=B$. At round
$t\in\{0,\ldots,T-1\}$, a task arrives with context
$\mathbf{x}_t\in\mathcal X$, while $B_t$ denotes the buyer's current wallet
balance.

Each provider $a\in\{1,\ldots,K\}$ carries a charge $p_{a,t}$. The buyer may
select only from the affordable set
\begin{equation}
\mathcal A_t
=
\left\{
a\in\{1,\ldots,K\}
:
p_{a,t}\le B_t
\right\}.
\label{eq:affordable-set}
\end{equation}
If $\mathcal A_t=\emptyset$, the interaction terminates. Let $\tau\le T$ denote
the resulting number of completed purchases.

Let $H_t$ denote the information available before selection, including the
current task context, wallet state, affordable set, any provider metadata
exposed before payment, and feedback from previous purchases. An admissible
policy selects only affordable providers:
\begin{equation}
a_t
\sim
\pi_t(\cdot\mid H_t),
\qquad
\operatorname{supp}
\left(
\pi_t(\cdot\mid H_t)
\right)
\subseteq
\mathcal A_t.
\label{eq:admissible-policy}
\end{equation}
Let $\Pi_{\mathrm{adm}}$ denote the class of admissible policies.

The selected provider produces
\begin{equation}
(q_t,f_t)
\sim
\mathcal D_{a_t,t}
\left(
\cdot
\mid
\mathbf{x}_t
\right),
\label{eq:provider-outcome}
\end{equation}
where $q_t\in[0,1]$ is the realized task quality and
$f_t\in\{0,1\}$ indicates paid non-delivery: the service is charged but
returns no usable output. After payment, the buyer observes $(q_t,f_t)$ and
the realized charge $c_t=p_{a_t,t}$, and the wallet evolves as
\begin{equation}
B_{t+1}=B_t-c_t.
\label{eq:wallet-transition}
\end{equation}

Outcomes and realized costs are observed only for the selected provider,
yielding \emph{chosen-only paid feedback}. Provider outcome distributions and
charges may change over time without announced change points.

\subsection{Buyer Objective}
\label{sec:buyer-objective}

The realized service utility is
\begin{equation}
u_t=q_t-\nu f_t,
\label{eq:service-utility}
\end{equation}
where $\nu\ge0$ penalizes paid non-delivery. The realized charge is normalized
as
\begin{equation}
\widetilde c_t
=
\operatorname{clip}
\left(
\frac{c_t}{c_{\max}},
0,
1
\right),
\qquad
c_{\max}>0,
\label{eq:normalized-charge}
\end{equation}
where $c_{\max}$ is a fixed normalization constant.

Let $\lambda_t>0$ denote wallet pressure, computed from the current balance
relative to a spending plan. It rises when cumulative spending runs ahead of
plan and falls when spending runs behind. Its normalized value is
\begin{equation}
\lambda_{\mathrm{norm},t}
=
\frac{\lambda_t}{1+\lambda_t}
\in(0,1).
\label{eq:normalized-wallet-pressure}
\end{equation}
The payment-aware reward for a completed purchase is
\begin{equation}
r_t
=
\left(
1-\lambda_{\mathrm{norm},t}
\right)
u_t
-
\lambda_{\mathrm{norm},t}
\widetilde c_t.
\label{eq:payment-aware-reward}
\end{equation}
Thus, increasing wallet pressure shifts weight from service utility to payment.

The buyer seeks an admissible policy maximizing expected cumulative
payment-aware reward over the served rounds:
\begin{equation}
\pi^\star
\in
\arg\max_{\pi\in\Pi_{\mathrm{adm}}}
\;
\mathbb E^{\pi}
\left[
\sum_{t=0}^{\tau-1}
r_t
\right].
\label{eq:buyer-objective}
\end{equation}
This formulation is independent of any particular decision or learning
algorithm.

\section{The 402Pilot Decision Layer}
\label{sec:layer}

To operationalize this decision process, we introduce 402Pilot, a
policy-agnostic layer between agent buyers and the payment stack. It implements
provider selection and post-payment learning through interchangeable buyer
policies. Figure~\ref{fig:architecture} shows the per-round decision flow.

\begin{figure*}[!t]
\centering
\includegraphics[width=\linewidth]{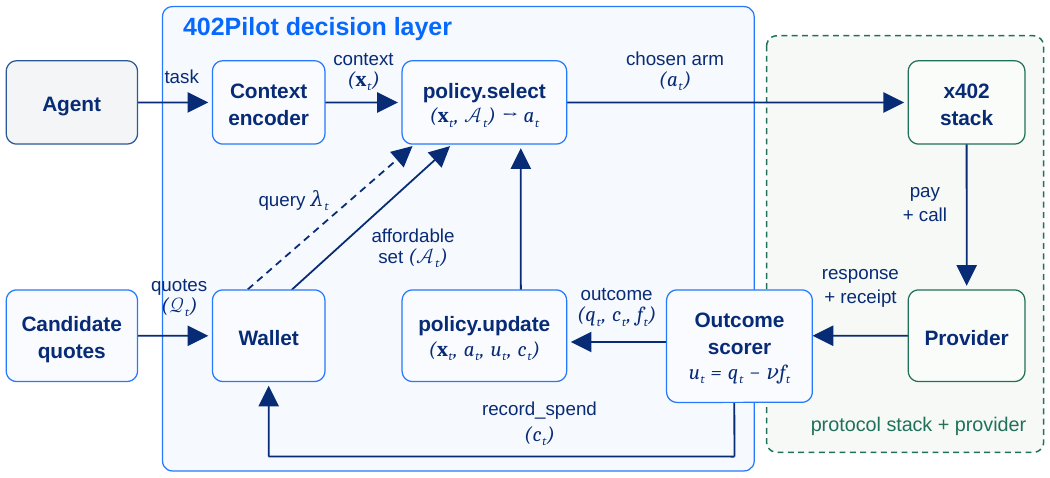}
\caption{The 402Pilot per-round decision flow. Upstream components supply a
task and payable candidates $\mathcal{Q}_t$. Within 402Pilot, the context
encoder derives the task context $\mathbf{x}_t$, while the wallet separately
exposes the affordable set $\mathcal{A}_t$ and wallet pressure $\lambda_t$.
The policy selects $a_t$; the payment substrate executes the selected
transaction; and the returned response and receipt yield $(q_t,f_t,c_t)$.
The layer then reports $c_t$ to the wallet and passes
$(\mathbf{x}_t,a_t,u_t,c_t)$ to the policy update. The figure shows x402 as
the payment substrate, but the same boundary can be implemented by another
programmable-payment substrate.}
\Description{Architecture diagram showing separate task-context and wallet
inputs to provider selection, followed by payment execution, outcome scoring,
wallet updating, and chosen-only policy feedback.}
\label{fig:architecture}
\end{figure*}

Upstream components provide a task request and candidate payment requirements
\[
\mathcal{Q}_t=\{(a,p_{a,t})\}_{a=1}^{K}.
\]
The context encoder maps the request to $\mathbf{x}_t$, while the wallet uses
$\mathcal{Q}_t$ to derive the affordable set $\mathcal{A}_t$ and wallet
pressure $\lambda_t$. Wallet pressure compares cumulative spending with the
expected budget trajectory:
\begin{equation}
\lambda_t
=
\begin{cases}
\lambda_0, & t=0,\\[2mm]
\lambda_0
\exp\!\left[
\alpha\left(
\dfrac{(B-B_t)/B}{t/T}-1
\right)
\right],
& t>0,
\end{cases}
\label{eq:wallet-pressure-schedule}
\end{equation}
where $\lambda_0>0$ is the baseline pressure and $\alpha\geq0$ controls its
sensitivity to deviations from the spending plan. The exponential form keeps
pressure positive and smoothly increases cost sensitivity as overspending
grows. The policy then invokes
$\mathtt{select}(\mathbf{x}_t,\mathcal{A}_t)\to a_t\in\mathcal{A}_t$ and passes
the selected provider to the underlying payment substrate, such as x402, for
execution and settlement. After the transaction, the outcome scorer converts
the provider response and payment receipt into the realized utility $u_t$ and
charge $c_t$. The wallet records $c_t$, and the policy receives the
post-payment feedback through
$\mathtt{update}(\mathbf{x}_t,a_t,u_t,c_t)$.

The interface restricts selection to $\mathcal{A}_t$ and exposes feedback only
for the chosen provider, consistent with the affordability and chosen-only
feedback setting in Section~\ref{sec:problem}. It supports diverse buyer
policies, including heuristic, optimization-based, and learning-based
approaches.

\section{The PA-DCT Policy}
\label{sec:method}

As a concrete buyer policy within 402Pilot, we introduce Payment-Aware
Discounted Contextual Thompson Sampling (PA-DCT), a contextual bandit. PA-DCT
operates as a continual decision loop: it maintains discounted Bayesian
posteriors over provider utility and cost, draws Thompson samples, and uses
wallet pressure to select among affordable providers. After each transaction,
realized utility and cost update the posteriors, allowing the policy to adapt
as provider performance and prices change.

\subsection{Discounted Utility and Cost Posteriors}
\label{sec:method-posteriors}

PA-DCT conditions its estimates on task context through a discrete bucket map
$k_t=k(\mathbf{x}_t)\in\{1,\ldots,K_b\}$. For each provider--context pair
$(a,k)$, it maintains two discounted Bayesian posteriors: a Q-posterior over
realized utility $u_t=q_t-\nu f_t$ and a C-posterior over realized cost $c_t$.

Each cell stores a shared effective count $n^{a,k}_t$, a utility sum
$S^{a,k}_t$, and a cost sum ${S_c}^{a,k}_t$. Each completed transaction
contributes one observation to both posteriors. To remain responsive to
changing reliability and prices, PA-DCT applies a common discount factor
$\gamma\in(0,1]$ to every cell at the start of each round. The selection-time
statistics are
\begin{equation}
\label{eq:discounted-statistics}
\begin{aligned}
\bar n^{a,k}_t &= \gamma n^{a,k}_{t-1},\\
\bar S^{a,k}_t &= \gamma S^{a,k}_{t-1},\\
\bar S^{a,k}_{c,t} &= \gamma {S_c}^{a,k}_{t-1}.
\end{aligned}
\end{equation}
The Q-posterior uses prior mean $\mu_{0,q}$, prior variance
$\sigma^2_{0,q}$, and likelihood variance $\sigma^2_q$. Its posterior variance
and mean are
\begin{equation}
\label{eq:posterior-q}
\begin{aligned}
\Sigmaq{a}{k}
&= \left(\frac{1}{\sigma^2_{0,q}}
  + \frac{\bar n^{a,k}_t}{\sigma^2_q}\right)^{-1},\\
\thetaq{a}{k}
&= \Sigmaq{a}{k}
  \left(\frac{\mu_{0,q}}{\sigma^2_{0,q}}
  + \frac{\bar S^{a,k}_t}{\sigma^2_q}\right).
\end{aligned}
\end{equation}
where $\thetaq{a}{k}$ and $\Sigmaq{a}{k}$ denote the posterior mean and
variance of the utility model, respectively.

The C-posterior follows the same Normal--Normal construction. Its posterior
mean and variance $(\thetac{a}{k},\Sigmac{a}{k})$ are computed analogously
using prior $\mathcal{N}(\bar c_a,\sigma^2_{0,c})$, likelihood variance
$\sigma^2_c$, and discounted cost sum $\bar S^{a,k}_{c,t}$. The initial
specification cost $\bar c_a$ defines the cost prior, while observed receipt
charges continually update the posterior as provider prices change.

All provider--context cells are discounted once per round, including those not
selected in the current round, so stale evidence gradually loses influence
over time.

\subsection{Selection Rule}
\label{sec:method-decision}

At decision time, PA-DCT implements $\mathtt{select}$ using the task context
$\mathbf{x}_t$, the affordable provider set $\mathcal{A}_t$, and the wallet
pressure $\lambda_t$. Write $\lamn=\lambda_{\mathrm{norm},t}$ for the
normalized wallet pressure in Eq.~\eqref{eq:normalized-wallet-pressure}. The
context bucket determines the corresponding provider--context cells, from
which Thompson samples are drawn as
\begin{equation}
\label{eq:padct-sample}
\begin{aligned}
\hatu{a}{k_t}
&\sim \mathcal{N}\!\big(\thetaq{a}{k_t},\, \Sigmaq{a}{k_t}\big),\\
\hatc{a}{k_t}
&\sim \mathcal{N}\!\big(\thetac{a}{k_t},\, \Sigmac{a}{k_t}\big).
\end{aligned}
\end{equation}
The sampled cost is first normalized, after which the sampled utility and
normalized cost are combined into the following payment-aware reward:
\begin{equation}
\label{eq:sampled-pa-reward}
\begin{aligned}
\widetilde{\hat c}^{a,k_t}
&= \operatorname{clip}\!\left(
  \hatc{a}{k_t}/c_{\max},0,1\right),\\
\hat r_a
&= (1-\lamn)\hatu{a}{k_t}
  - \lamn\widetilde{\hat c}^{a,k_t}.
\end{aligned}
\end{equation}
The sampled reward in Eq.~\eqref{eq:sampled-pa-reward} is the pre-payment
counterpart of the realized reward in Eq.~\eqref{eq:payment-aware-reward}: it
replaces realized utility and normalized cost with Thompson samples. As wallet
pressure increases, selection becomes progressively more cost-sensitive while
remaining driven by posterior samples rather than fixed cost thresholds. The
provider with the highest sampled reward is selected:
$a_t=\arg\max_{a\in\mathcal{A}_t}\hat r_a$.

\begin{algorithm}[tb]
\caption{PA-DCT policy loop.}
\label{alg:padct}
\begin{algorithmic}[1]
\Require Initial specification costs $\bar c_a$; discount $\gamma$;
normalizer $c_{\max}$.
\Require Context encoder $k(\cdot)$ and Q/C posterior priors.
\For{$(a,k)\in\{1,\ldots,K\}\times\{1,\ldots,K_b\}$}
  \State $n^{a,k}\leftarrow 0$;\quad $S^{a,k}\leftarrow 0$;\quad
  ${S_c}^{a,k}\leftarrow 0$
\EndFor
\For{each served round $t$ with $\mathcal{A}_t\neq\emptyset$}
  \State Receive $\mathbf{x}_t$ and $\mathcal{A}_t$
  \State For all $(a,k)$, multiply $n^{a,k}$, $S^{a,k}$, and
  ${S_c}^{a,k}$ by $\gamma$
  \State Read $\lambda_t$;\quad $k_t\leftarrow k(\mathbf{x}_t)$;
  \quad $\lamn\leftarrow\lambda_t/(1+\lambda_t)$
  \For{$a\in\mathcal{A}_t$}
    \State Compute the Q/C posterior parameters for $(a,k_t)$
    \State Sample $\hatu{a}{k_t}$ and $\hatc{a}{k_t}$ using
    Eq.~\eqref{eq:padct-sample}
    \State $\widetilde{\hat c}^{a,k_t}\leftarrow
    \operatorname{clip}(\hatc{a}{k_t}/c_{\max},0,1)$
    \State $\hat r_a\leftarrow(1-\lamn)\hatu{a}{k_t}
    -\lamn\widetilde{\hat c}^{a,k_t}$
  \EndFor
  \State Select $a_t\leftarrow\arg\max_{a\in\mathcal{A}_t}\hat r_a$
  \State Receive realized $(\ut,c_t)$
  \State $n^{a_t,k_t}\mathrel{+}=1$;\quad
  $S^{a_t,k_t}\mathrel{+}=\ut$;\quad
  ${S_c}^{a_t,k_t}\mathrel{+}=c_t$
\EndFor
\end{algorithmic}
\end{algorithm}

\subsection{Learning from Post-Payment Feedback}
\label{sec:method-feedback}

After the selected transaction completes, PA-DCT implements $\mathtt{update}$
using the realized utility $\ut$ and receipt cost $c_t$. The task is mapped to
the corresponding context bucket $k_t$, and the observations are incorporated
only into the selected provider--context cell:
\begin{equation}
\label{eq:chosen-cell-update}
\begin{aligned}
n^{a_t,k_t}_t &= \bar n^{a_t,k_t}_t + 1,\\
S^{a_t,k_t}_t &= \bar S^{a_t,k_t}_t + \ut,\\
{S_c}^{a_t,k_t}_t &= \bar S^{a_t,k_t}_{c,t} + c_t.
\end{aligned}
\end{equation}
All other provider--context cells retain their discounted statistics from
Eq.~\eqref{eq:discounted-statistics}.

Algorithm~\ref{alg:padct} summarizes the PA-DCT policy loop.
Appendix~\ref{appendix:proofs} provides formal support for its key components
and discusses the scope of the analysis.

\section{Evaluation}
\label{sec:eval}

Evaluating buyer-side payment policies requires a benchmark
with diverse tasks, heterogeneous providers, controllable
market scenarios, and reproducible execution. Existing LLM
benchmarks do not jointly satisfy these requirements, so we
construct 402Pilot-Bench by extending public datasets with
multiple provider pipelines, prices, and market dynamics.
Under its frozen-replay design, multiple responses are
pre-generated and scored for each task--provider pair, then
sampled from the same frozen pool across policies, enabling
reproducible comparisons without live-inference variability.
Since frozen replay abstracts payment execution, a local x402
witness separately validates the decision-layer interface
against an actual HTTP~402 payment flow; implementation
details appear in Appendix~\ref{appendix:x402-witness}.

\subsection{Experimental Setup}
\label{sec:eval-setup}

\paragraph{Benchmark.}
\emph{402Pilot-Bench} contains $823$ effective tasks across the
four datasets in Table~\ref{tab:benchmark}~\citep{chen2021humaneval,
yang2018hotpotqa,joshi2017triviaqa,kopf2023openassistant}, which define the
task buckets used by PA-DCT. Utility scores are normalized to
$[0,1]$ using the dataset-specific metrics shown in the table. For each
task--provider pair, we cache five independently generated
responses with precomputed utility scores. Across the five providers
described below, this yields $823$ tasks $\times 5$ providers
$\times 5$ responses $=20{,}575$ scored responses. All scenarios
replay the same task pool and response cache; only realized failures
and effective prices change.

\begin{table}[!htbp]
\centering
\small
\setlength{\tabcolsep}{3.5pt}
\begin{tabularx}{\columnwidth}{@{}>{\raggedright\arraybackslash}p{0.34\columnwidth}r>{\raggedright\arraybackslash}X@{}}
\toprule
\textbf{Dataset / task type} & \textbf{Tasks}
& \textbf{Utility metric} \\
\midrule
HumanEval / coding
& 164 & Execution-based pass@1 \\
HotpotQA / multi-hop QA
& 220 & Max normalized EM/F1 \\
TriviaQA-web / closed-form web QA
& 219 & Max normalized EM/F1 \\
OpenAssistant / open-ended QA
& 220 & Cached LLM-as-judge \\
\midrule
\textbf{Total}
& \textbf{823} & -- \\
\bottomrule
\end{tabularx}
\caption{Composition of 402Pilot-Bench.}
\label{tab:benchmark}
\end{table}

\paragraph{Providers.}
Table~\ref{tab:providers} summarizes the five providers.
P-cheap, P-mid, and P-premium represent different
price--quality levels. P-adv and P-flaky simulate two
problems that buyers may face in an open service market:
fraudulent or misrepresented service quality, and unstable
service that may fail after payment. Because both have the
same listed price as P-mid, their true performance must be
learned from observed outcomes.

\begin{table}[!htbp]
\centering
\small
\setlength{\tabcolsep}{2.5pt}
\begin{tabularx}{\linewidth}{@{}lcX@{}}
\toprule
Provider & Price & Pipeline \\
\midrule
P-cheap   & $0.0005$ & qwen3.5-flash; uniform prompt; \emph{cheap tier} \\
P-mid     & $0.002$  & GPT-5.4-mini; uniform prompt; \emph{mid tier} \\
P-premium & $0.01$   & GPT-5.4; uniform prompt; \emph{premium tier} \\
P-adv     & $0.002$  & GPT-5.4-mini; 60\% adversarial prompt; \emph{fluent wrong answers} \\
P-flaky   & $0.002$  & GPT-5.4-mini; 40\% timeout injection; \emph{billed timeouts} \\
\bottomrule
\end{tabularx}
\caption{Provider set used in 402Pilot-Bench. P-mid, P-adv, and P-flaky share the same price tier and base model.}
\label{tab:providers}
\end{table}

\paragraph{Scenario design.}
The scenarios test whether a policy can maintain a useful
allocation when provider conditions remain stable or change
without warning. S1 is stationary. In S2, P-mid experiences
a reliability shock, with a 30\% forced-failure rate from
rounds 3,000 to 5,500 before recovery. In S3, P-premium's
price drops from \$0.01 to \$0.002 at round 1,000, testing
whether the policy shifts toward a newly attractive provider.

Each policy is evaluated for \(T=10{,}000\) rounds with a
\$50 budget over 30 paired seeds. This budget is deliberately
binding: at the initial premium price, it covers only 5,000
of the 10,000 rounds. For each seed, all policies are evaluated
using the same task order, cached-response draws, and scenario
changes.

\paragraph{Hyperparameters.}
The discount factor $\gamma$, paid-non-delivery penalty $\nu$,
and cost scale $c_{\max}$ most directly govern the policy's
adaptation, failure penalty, and cost normalization. The discount factor
$\gamma=0.999$ provides an effective memory of about
$10^3$ rounds, balancing responsiveness with stable
estimation. The paid-non-delivery penalty $\nu=0.5$
makes billed failures materially worse without overwhelming
the $[0,1]$ utility signal. The cost scale
$c_{\max}=\$0.01$ maps the initial premium price to 1 and
keeps all lower prices on the same normalized scale.
All remaining settings are fixed; complete notation and
values appear in Appendix~\ref{appendix:notation}, with sensitivity analyses in
Appendix~\ref{appendix:sensitivity}.

\paragraph{Comparators.}
The comparators cover non-learning policies, learning
policies, and an oracle upper bound. The non-learning group
includes Random, three fixed-arm policies, and BudgetRule,
capturing uninformed selection, static provider preferences,
and a hand-crafted wallet rule. BudgetRule switches from
P-premium to P-mid and then P-cheap when the remaining
budget falls below 50\% and 20\%, respectively.

The learning group covers the main capabilities relevant to
buyer-side routing. Contextual DS-TS~\citep{qi2023dsts}
tracks utility drift but lacks wallet pressure; Contextual
BTS~\citep{xia2015bts} learns utility and cost under a finite
budget but does not discount stale observations; PM-Greedy,
inspired by FrugalGPT-style routing~\citep{chen2023frugalgpt},
routes using listed prices rather than learned realized
costs; and LinCBwK-Adapt~\citep{agrawal2016linearcbwk}
combines context and budget constraints but lacks explicit
drift discounting.

The replay True Oracle selects the affordable arm with the
highest realized payment-aware reward and is reported only
as an unattainable upper bound. Implementation details and
replay adaptations of all comparators appear in
Appendix~\ref{sec:app-extended-baselines}.

\subsection{Metrics}
\label{sec:eval-metrics}

Evaluating buyer-side decisions under a fixed wallet requires
considering both the service quality delivered and how the
budget is used. We therefore report complementary metrics
covering delivered quality, wallet consumption, spending
efficiency, and performance relative to the replay upper
bound. Table~\ref{tab:metrics} gives their calculations.

Full-horizon mean quality measures the service quality
delivered over all \(T\) rounds, with rounds after budget
exhaustion counted as zero. Budget use captures the share of
the initial wallet consumed. ROI measures the quality obtained
per dollar spent and is used as an efficiency diagnostic rather
than an optimization objective. PA-gap/\(T\) measures the
per-round cumulative payment-aware reward gap relative to the
replay True Oracle, providing an empirical replay counterpart
of the buyer objective in Eq.~\eqref{eq:buyer-objective}.

\begin{table}[!htbp]
\centering
\small
\setlength{\tabcolsep}{2pt}
\begin{tabularx}{\linewidth}{@{}lXX@{}}
\toprule
Metric & Calculation & Meaning \\
\midrule
$\bar q_T \uparrow$ & $(1/T)\sum_{t<\tau}q_t$ & Mean quality; exhausted rounds count as zero. \\
Budget use & $100\sum_{t<\tau}c_t/B$ & Wallet share spent. \\
ROI $\uparrow$ & $\sum_{t<\tau}q_t/\sum_{t<\tau}c_t$ & Quality per dollar; efficiency only. \\
PA-gap/$T$ $\downarrow$ & $(R^{\mathrm{TO}}_T-R^\pi_T)/T$ & Gap to replay True Oracle. \\
\bottomrule
\end{tabularx}
\caption{Evaluation metrics.}
\label{tab:metrics}
\end{table}

\subsection{Main Results}
\label{sec:eval-main}

\begin{table*}[!t]
\centering
\small
\setlength{\tabcolsep}{2pt}
\begin{tabular*}{\textwidth}{@{\extracolsep{\fill}}lrrrrrrrrrrrr@{}}
\toprule
& \multicolumn{4}{c}{S1 Stationary}
& \multicolumn{4}{c}{S2 Mid Outage}
& \multicolumn{4}{c}{S3 Premium Promo} \\
\cmidrule(lr){2-5}\cmidrule(lr){6-9}\cmidrule(lr){10-13}
Policy
& $\bar q_T \uparrow$ & \shortstack{Budget\\use (\%)} & ROI $\uparrow$ & \shortstack{PA-gap/$T$\\$\downarrow$}
& $\bar q_T \uparrow$ & \shortstack{Budget\\use (\%)} & ROI $\uparrow$ & \shortstack{PA-gap/$T$\\$\downarrow$}
& $\bar q_T \uparrow$ & \shortstack{Budget\\use (\%)} & ROI $\uparrow$ & \shortstack{PA-gap/$T$\\$\downarrow$} \\
\midrule
Random            & $0.688$ & $66$  & $209$  & $0.365$ & $0.676$ & $66$  & $205$  & $0.375$ & $0.688$ & $37$ & $370$  & $0.274$ \\
Always-P-cheap    & $0.610$ & $10$  & $\mathbf{1{,}220}$ & $0.167$ & $0.610$ & $10$  & $\mathbf{1{,}220}$ & $\mathbf{0.164}$ & $0.610$ & $10$ & $\mathbf{1{,}220}$ & $0.195$ \\
Always-P-mid      & $0.819$ & $40$  & $410$  & $\mathbf{0.101}$ & $0.757$ & $40$  & $379$  & $0.174$ & $0.819$ & $40$ & $410$  & $0.129$ \\
Always-P-premium  & $0.433$ & $100$ & $87$   & $1.072$ & $0.433$ & $100$ & $87$   & $1.070$ & $\mathbf{0.865}$ & $56$ & $309$  & $0.400$ \\
BudgetRule        & $\mathbf{0.831}$ & $80$ & $208$ & $0.692$ & $0.769$ & $80$ & $192$ & $0.722$ & $0.859$ & $56$ & $307$ & $0.405$ \\
\addlinespace[2pt]
Contextual DS-TS  & $0.559$ & $100$ & $112$ & $0.858$ & $0.514$ & $100$ & $103$ & $0.884$ & $0.840$ & $48$ & $349$ & $0.264$ \\
Contextual BTS    & $0.612$ & $10$  & $1{,}187$ & $0.169$ & $0.612$ & $10$  & $1{,}187$ & $0.166$ & $0.612$ & $10$ & $1{,}187$ & $0.197$ \\
PM-Greedy         & $0.732$ & $90$  & $179$ & $0.542$ & $0.627$ & $99$  & $128$ & $0.660$ & $0.836$ & $44$ & $384$ & $0.198$ \\
LinCBwK-Adapt     & $0.824$ & $93$  & $180$ & $0.457$ & $\mathbf{0.796}$ & $100$ & $159$ & $0.515$ & $0.836$ & $45$ & $377$ & $0.195$ \\
\addlinespace[2pt]
\textbf{PA-DCT (ours)} & $0.797$ & $42$ & $378$ & $0.133$ & $0.761$ & $43$ & $357$ & $0.166$ & $0.831$ & $39$ & $429$ & $\mathbf{0.121}$ \\
\midrule
True Oracle (UB)  & $0.901$ & $32$ & $561$ & $0.000$ & $0.901$ & $33$ & $551$ & $0.000$ & $0.906$ & $25$ & $722$ & $0.000$ \\
\bottomrule
\end{tabular*}
\caption{Main results on 402Pilot-Bench, averaged over $30$ paired seeds. Rows are grouped as non-learning baselines, learning baselines, PA-DCT, and the replay upper bound. Bold marks the best non-oracle entry in the quality, ROI, and PA-gap/$T$ columns.}
\label{tab:main}
\end{table*}

PA-DCT delivers the strongest fixed-wallet adaptive trade-off on 402Pilot-Bench: it maintains competitive service quality under controlled spend and reallocates in the right direction when reliability or prices change. Across the three scenarios in Table~\ref{tab:main}, PA-DCT consistently uses only $39$--$43\%$ of the wallet while remaining among the strongest non-oracle policies on PA-gap/$T$, including the best non-oracle result in S3. Together, these results show that the objective of buyer-side payment decision-making is not to maximize service quality in every market, but to learn when additional quality is worth paying for under a finite wallet. Accordingly, PA-DCT is not intended to maximize pointwise quality in every scenario: policies such as LinCBwK-Adapt and premium-heavy rules occasionally achieve higher absolute quality, but only by consuming substantially more of the wallet, resulting in weaker payment-aware trade-offs.

The three scenarios highlight different aspects of PA-DCT's behavior. In the stationary market (S1), Always-P-mid achieves the best non-oracle PA-gap/$T$ with similarly low budget use, reflecting the advantage of a fixed policy when its selected tier matches a stable market. PA-DCT pays a modest exploration cost but still uses only $42\%$ of the wallet; by contrast, BudgetRule and LinCBwK-Adapt obtain slightly higher quality while consuming $80\%$ and $93\%$, respectively. This highlights the role of wallet pressure in preserving long-horizon budget efficiency. In the reliability shock (S2), PA-DCT is not significantly different from the numerically best non-oracle baseline, Always-P-cheap, on PA-gap/$T$, while delivering substantially higher service quality ($0.761$ vs.\ $0.610$) through reallocation away from the failing mid-tier provider. This highlights the role of reliability adaptation in preserving service quality. In the price shock (S3), PA-DCT achieves the best non-oracle PA-gap/$T$. Several learning baselines attain slightly higher absolute quality, but their gains come with substantially worse PA-gap/$T$ ($0.195$--$0.264$ vs.\ $0.121$) and lower ROI. This highlights the role of price adaptation in reallocating purchases as provider prices change while keeping spending under control.

Appendix~\ref{appendix:allocation-dynamics} provides a direct view of these allocation shifts, showing PA-DCT moving away from P-mid during the outage, returning after recovery, and shifting toward discounted P-premium after the price promotion.

A closer look at the three types of baselines shows how their underlying design principles are reflected in the experimental results. Among the non-learning policies, fixed-arm strategies perform well only when their selected tier matches the market, while BudgetRule follows wallet thresholds rather than provider feedback. The Thompson-sampling baselines expose a split between adaptation and budget control. Contextual DS-TS responds to market changes but exhausts the wallet in S1 and S2, whereas Contextual BTS preserves the wallet but remains near the cheap tier and therefore misses the discounted premium opportunity in S3. The cost-aware learning baselines provide the strongest quality comparison, but not the strongest payment-aware trade-off. PM-Greedy and LinCBwK-Adapt achieve competitive quality, but consistently at the cost of substantially higher wallet consumption and worse PA-gap/$T$. Taken together, these results show that none of the baselines simultaneously maintains budget discipline and adapts effectively across reliability and price changes. The ablation study below separates the roles of PA-DCT's components in achieving this balance.

\subsection{Ablation Study}
\label{sec:eval-ablations}

The ablations further clarify the role of each PA-DCT component. To complement
the endpoint metrics, AdaptT measures the number of post-shock rounds required
for sustained recovery (lower is better); complete ablation definitions and
results appear in Appendix~\ref{appendix:ablations}. Payment-aware ranking primarily preserves
full-horizon solvency: removing it ($-P$) exhausts every S1/S2 wallet and
increases PA-gap/$T$ from $0.166$ to $0.878$ in S2, while full-horizon mean
quality falls from $0.761$ to $0.517$. Adaptation depends on both temporal discounting and
realized-cost learning. Removing discounting ($-D$) slows S2 recovery by $53\%$
($1{,}467\!\rightarrow\!2{,}249$ rounds), whereas replacing learned realized
costs with specification prices ($-C_{\mathrm{post}}$) largely eliminates price
adaptation in S3, reducing post-shock P-premium selection from $66.1\%$ to $3.6\%$ and
increasing PA-gap/$T$ from $0.121$ to $0.144$.

Context and Thompson sampling mainly improve adaptation efficiency and
robustness. Removing contextual bucketing ($-C$) delays S3 AdaptT from $200$ to
$398$ rounds with little change in endpoint performance. Replacing Thompson
sampling with posterior means ($-\mathrm{TS}$) increases the S1 PA-gap/$T$
standard deviation from $0.004$ to $0.046$ and delays S3 AdaptT from $200$ to $2{,}232$
rounds, with only $24/30$ seeds reaching the AdaptT threshold. Overall, the ablations
assign distinct operational roles to the components: payment awareness
preserves solvency, discounting and realized-cost learning enable adaptation,
context accelerates recovery, and Thompson sampling improves robustness.

\section{Discussion and Future Work}
\label{sec:discussion}

Recent advances in programmable payment protocols and community-driven
reputation mechanisms provide an increasingly capable infrastructure for
agentic commerce. Our results suggest that infrastructure alone is not
sufficient for autonomous payment, because agents must still decide how to
allocate a limited budget as service value and market conditions evolve. This
motivates buyer-side decision-making as a distinct layer between agents and
payment protocols. PA-DCT serves as an initial instantiation of this decision
layer rather than a definitive solution; future work may explore reinforcement
learning, foundation-model-based policies, or other decision mechanisms within
the same framework.

Our evaluation emphasizes reproducible policy comparison rather than complete
deployment validation. Frozen replay isolates buyer-side decision quality under
identical service traces, while the local x402 witness demonstrates end-to-end
integration with an HTTP~402 payment loop. Extending evaluation to controlled
live-market deployments remains important future work.

The benchmark also covers a finite set of providers, tasks, and market
conditions, and learning quality ultimately depends on the reliability of
outcome feedback. Future work should expand the benchmark to broader industrial
scenarios, richer provider ecosystems, more complex market dynamics, and more
reliable outcome evaluation.

\section{Conclusion}
\label{sec:conclusion}

Programmable payment protocols enable agents to execute payments, but they do
not determine what is worth paying for. We introduced 402Pilot as a buyer-side
decision layer between agents and payment protocols, together with PA-DCT as an
initial payment-aware decision policy for budget-constrained service selection.
Evaluation on 402Pilot-Bench showed that PA-DCT achieves a strong fixed-wallet
trade-off while adapting its purchasing decisions under reliability and price
changes. These results establish buyer-side decision-making as a new research
direction for agentic commerce and motivate future work on richer decision
policies, broader benchmarks, and real-world deployment.

\FloatBarrier
\putbib[references]
\end{bibunit}

\clearpage
\appendix
\begin{bibunit}[aaai2027]
\section{Notation and Locked Settings}
\label{appendix:notation}

This appendix collects the notation and locked experimental settings used
throughout the paper and Appendix~B.

\begin{table*}[!t]
\centering
\small
\setlength{\tabcolsep}{5pt}
\begin{tabularx}{\textwidth}{@{}l>{\raggedright\arraybackslash}X>{\raggedright\arraybackslash}X@{}}
\toprule
Symbol & Meaning & Domain / locked value \\
\midrule
\multicolumn{3}{@{}l}{\textbf{Rounds, arms, and context}} \\
$t,T$ & Round index and horizon & $t\in\{0,\ldots,T-1\}$; $T=10{,}000$ \\
$a,K$ & Provider arm and number of arms & $K=5$: P-cheap, P-mid, P-premium, P-adv, P-flaky \\
$a_t$ & Arm selected at round $t$ & $a_t\in\mathcal{A}_t$ \\
$\mathbf{x}_t$ & Task context & Released encoder: $d=7$ (task-type one-hot, difficulty, and two wallet features drawn from $H_t$) \\
$H_t$ & Pre-selection information available to the policy & Context, wallet state, affordable set, exposed provider metadata, and past purchase feedback \\
$k(\mathbf{x}_t),K_b$ & Task-type bucket map used by \padct{} & T1/T2/T3a/T3b; $K_b=4$ \\
\addlinespace[2pt]
\multicolumn{3}{@{}l}{\textbf{Payment loop and feedback}} \\
$\mathcal{Q}_t$ & Quoted candidate set from discovery/pricing & Provider--price pairs before selection \\
$p_{a,t}$ & Provider charge at round $t$ & May drift; used for affordability and may be exposed as listed-price metadata \\
$\mathcal{A}_t$ & Affordable set, exposed as a mask before selection & $\mathcal{A}_t=\{a:p_{a,t}\le B_t\}$ \\
$q_t$ & Observed quality of the chosen arm & $[0,1]$ \\
$f_t$ & Paid non-delivery flag (timeout, parse failure) & $\{0,1\}$ \\
$c_t$ & Realized charge & $c_t=p_{a_t,t}$, in USDC \\
$B,B_t$ & Initial and remaining budget & $B=\$50$; $B_{t+1}=B_t-c_t$ \\
$\tau$ & Completed purchases before termination or horizon end & $\tau\le T$ \\
\addlinespace[2pt]
\multicolumn{3}{@{}l}{\textbf{Reward and wallet pressure}} \\
$\nu$ & Failure penalty in utility & $\nu=0.5$ \\
$\ut$ & Utility: failure-penalized quality; updates the Q-posterior & $\ut=q_t-\nu f_t$ \\
$c_{\max},\ctilde$ & Cost normalizer and normalized cost & $c_{\max}=\$0.01$; $\ctilde=\operatorname{clip}(c_t/c_{\max},0,1)$ \\
$\bdev_t$ & Deviation of the realized spend rate from the linear plan & $\bdev_t=\frac{(B-B_t)/B}{t/T}-1$ for $t>0$; $\bdev_0=0$ \\
$\alpha,\lambda_0$ & Budget-pressure schedule constants & $\alpha=2.0$, $\lambda_0=1$ \\
$\lambda_t,\lamn$ & Raw and normalized wallet-pressure weights & $\lambda_t=\lambda_0\exp(\alpha\,\bdev_t)$; $\lamn=\lambda_t/(1+\lambda_t)$ \\
$\rt$ & Payment-aware reward & $\rt=(1-\lamn)\ut-\lamn\ctilde$ \\
\addlinespace[2pt]
\multicolumn{3}{@{}l}{\textbf{\padct{} posteriors and selection}} \\
$\bar c_a$ & Initially listed cost of arm $a$ & Prior mean of the C-posterior \\
$\mu_{0,q},\sigma^2_{0,q},\sigma^2_q$ & Q-prior mean, prior variance, and likelihood variance & $0.5$, $1.0$, $0.09$ \\
$\sigma^2_{0,c},\sigma^2_c$ & C-prior variance and likelihood variance & $10^{-4}$, $10^{-6}$ \\
$n^{a,k}_t,\bar n^{a,k}_t$ & Post-feedback and selection-time discounted counts of cell $(a,k)$ & Shared by the Q- and C-posteriors \\
$S^{a,k}_t,{S_c}^{a,k}_t$ & Discounted utility and cost sums (barred: selection-time) & New observations update only the chosen arm--bucket cell \\
$\gamma$ & Common discount of the Q- and C-posteriors & $0.999$ in main results \\
$\thetaq{a}{k},\Sigmaq{a}{k}$ & Q-posterior mean and variance & Normal--Normal posterior over utility \\
$\thetac{a}{k},\Sigmac{a}{k}$ & C-posterior mean and variance & Normal--Normal posterior over realized cost \\
$\hatu{a}{k},\hatc{a}{k}$ & Thompson samples drawn at decision time & From the Q- and C-posteriors \\
$\hat r_a$ & Decision-time sampled payment-aware reward & Mirrors $\rt$ with sampled utility and clipped sampled cost \\
\addlinespace[2pt]
\multicolumn{3}{@{}l}{\textbf{Reported evaluation quantities}} \\
$\bar q_T$ & Full-horizon mean quality & Exhausted rounds count as $q_t=0$ \\
$R_T^\pi,\paReg_T$ & Cumulative PA reward and empirical PA-regret & $\paReg_T=R_T^{\mathrm{Oracle}}-R_T^\pi$; reported as PA-gap/$T$ \\
$\mathrm{ROI}$ & Quality per dollar & Efficiency diagnostic only \\
$\mathrm{AdaptT}$ & Shock-response diagnostic in S2/S3 & Rounds to the first qualifying ROI window; defined in Appendix~\ref{appendix:ablations} \\
\bottomrule
\end{tabularx}
\caption{Compact notation legend. Definitions remain in the main text; this
table records conventions and the locked values used in all reported
experiments.}
\label{tab:notation-compact}
\end{table*}

\section{Proofs}
\label{appendix:proofs}

This appendix provides formal support for the main analytical properties of
\padct{} rather than a complete analysis of the replay system. Specifically,
we establish reward boundedness (Prop.~\ref{prop:reward-bounded}), posterior
consistency in the stationary no-discount regime
(Prop.~\ref{prop:consistency}), a stationary well-specified regret reduction
(Thm.~\ref{thm:stationary-regret}), and a fixed-schedule adaptation rate after
an abrupt change (Thm.~\ref{thm:adaptation}). The regret theorem is
intentionally stated for a clean version of the score;
\S\ref{app:proof-limitations} records the gap to the full empirical setting.

\subsection{Shared assumptions}
\label{app:proof-setup}

\ifdefined\technicalreportonly
We use the notation in Appendix~\ref{appendix:notation} and the main report.
\else
We use the notation of \S\ref{sec:problem} and \S\ref{sec:method} of the main
paper, summarized in Appendix~\ref{appendix:notation}.
\fi
All regret sums below are over served rounds with nonempty affordable set; if
a wallet exhausts early, replace $T$ by the served horizon $\tau$. The formal
claims use the following assumptions.
Different results require different subsets of these assumptions; only the
stationary regret theorem relies on the stronger assumptions
(A2$^\prime$)--(A3$^\prime$) introduced below.
\begin{description}
\item[(A1)] \emph{Stationary or piecewise-stationary cells.} Conditional on the
chosen arm $a$ and bucket $k$, and averaging over the exogenous task draw
within that bucket, observations are independent across pulls and the joint
distribution of $(q_t,c_t,f_t)$ is identical within each phase.
\item[(A2)] \emph{Sub-Gaussian noise.} The utility $\ut$ given cell $(a,k)$ is
$\sigma_u$-sub-Gaussian around mean $\mu_u^{a,k}$; the realized cost $c_t$ is
$\sigma_{c,\mathrm{true}}$-sub-Gaussian around mean $\mu_c^{a,k}$. The policy
uses fixed proxy variances $\sigma_q^2,\sigma_c^2$, possibly different from the
true variances.
\item[(A3)] \emph{Bounded means.} $|\mu_u^{a,k}|\leq1$ and
$\mu_c^{a,k}\in[0,c_{\max}]$ for every $(a,k)$.
\end{description}

The stationary regret reduction further uses two restrictions that are not
needed by the boundedness, consistency, or adaptation claims:
\begin{description}
\item[(A2$^\prime$)] \emph{Well-specified product model.} The policy's
proxy likelihoods are the true likelihoods: utility and cost observations are
conditionally independent Gaussians around $(\mu_u^{a,k},\mu_c^{a,k})$ with
variances $\sigma_q^2=\sigma_u^2$ and
$\sigma_c^2=\sigma_{c,\mathrm{true}}^2$, and the prior factorizes as the
policy's independent Gaussian Q/C priors.
\item[(A3$^\prime$)] \emph{Clean score.} The regret reduction is stated for the
unclipped score
$(1-\lamn)\hatu{a}{k}-\lamn\hatc{a}{k}/c_{\max}$, or equivalently for
trajectories on which the implementation's cost-sample clip
\ifdefined\technicalreportonly
in the \padct{} score is inactive.
\else
in the \padct{} sampled reward (Eq.~\eqref{eq:sampled-pa-reward} of the main paper) is
inactive.
\fi
\end{description}

The posterior arguments use the Normal--Normal mean identities
\begin{equation}
\label{eq:posterior-mean-compact}
\thetaq{a}{k}
= \frac{\kappa_q\mu_{0,q}+S^{a,k}_t}{\kappa_q+n^{a,k}_t},
\qquad
\thetac{a}{k}
= \frac{\kappa_c\bar c_a+{S_c}^{a,k}_t}{\kappa_c+n^{a,k}_t},
\end{equation}
where $\kappa_q=\sigma_q^2/\sigma_{0,q}^2$ and
$\kappa_c=\sigma_c^2/\sigma_{0,c}^2$.
We write a single $n^{a,k}_t$ because each selected call supplies one utility
and one cost observation. If quality and cost discounts are split, the same
formulas apply to the two posteriors with their own effective counts.

\subsection{Proposition~\ref{prop:reward-bounded}: reward boundedness}
\label{app:proof-bounded}

\begin{proposition}[Reward boundedness]
\label{prop:reward-bounded}
For every round $t$ with $0\le\nu\le1$, the payment-aware reward satisfies
$\rt\in[-1,+1]$.
\end{proposition}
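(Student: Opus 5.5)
The plan is to bound the two terms of $\rt=(1-\lamn)\ut-\lamn\ctilde$ separately and then use that the weights $(1-\lamn)$ and $\lamn$ are a convex pair. By Eq.~\eqref{eq:normalized-wallet-pressure}, $\lambda_t>0$ gives $\lamn=\lambda_t/(1+\lambda_t)\in(0,1)$. Positivity of $\lambda_t$ itself is guaranteed by the schedule in Eq.~\eqref{eq:wallet-pressure-schedule}, since $\lambda_0>0$ and the exponential is positive. Hence both $1-\lamn$ and $\lamn$ are strictly positive and sum to one.

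\emph{Bounding the two terms.} For the utility, $\ut=q_t-\nu f_t$ with $q_t\in[0,1]$ and $f_t\in\{0,1\}$, so $\ut\in[-\nu,1]$. Under the hypothesis $0\le\nu\le1$ this gives $\ut\in[-1,1]$. For the cost, the clip in Eq.~\eqref{eq:normalized-charge} forces $\ctilde\in[0,1]$; this holds regardless of the value of $c_t$, so no assumption on prices is needed.

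\emph{Combining.} The upper bound is $\rt\le(1-\lamn)\cdot 1-\lamn\cdot 0=1-\lamn<1$. The lower bound is $\rt\ge(1-\lamn)(-1)-\lamn\cdot 1=-1$. Therefore $\rt\in[-1,1]$, and in fact $\rt\in[-1,1)$. In the paper's $t=0$ convention $\lambda_t=\lambda_0$, so the same argument covers that round; a round with $\mathcal A_t=\emptyset$ contributes no reward.

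There is no real obstacle: the only points needing care are that $\nu\le1$ is exactly what keeps $\ut\ge-1$, and that the lower bound is attained only in the limit $\lamn$ arbitrary with $\ut=-1$, $\ctilde=1$, which requires $\nu=1$ and a charge of at least $c_{\max}$. I would also remark that the identical argument bounds the sampled score $\hat r_a$ only after clipping the utility sample, since Gaussian samples $\hatu{a}{k}$ are unbounded. The proposition therefore concerns the realized reward only.
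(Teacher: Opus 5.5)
Your proof is correct and follows essentially the same route as the paper: use $0\le\nu\le1$ to get $|\ut|\le1$, note $\ctilde\in[0,1]$ and $\lamn\in(0,1)$, and combine with the convex weights. The paper does this in one line via $|\rt|\le(1-\lamn)|\ut|+\lamn|\ctilde|\le1$, while your one-sided split additionally gives the sharper $\rt<1$. One small correction to your closing remark: the lower bound $-1$ is attained exactly, not merely in a limit, for any value of $\lamn$, whenever $\ut=-1$ and $\ctilde=1$ (i.e., $\nu=1$, $q_t=0$, $f_t=1$, $c_t\ge c_{\max}$).
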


\begin{proof}
Since $q_t\in[0,1]$, $f_t\in\{0,1\}$, and $0\le\nu\le1$, we have
$\ut=q_t$ when $f_t=0$ and $\ut=q_t-\nu\in[-1,1]$ when $f_t=1$.
Thus $|\ut|\leq1$; also $\ctilde\in[0,1]$ and
$\lamn\in(0,1)$. Hence
$|\rt|\leq(1-\lamn)|\ut|+\lamn|\ctilde|\leq(1-\lamn)+\lamn=1$.
\end{proof}

\subsection{Proposition~\ref{prop:consistency}: posterior consistency}
\label{app:proof-consistency}

\begin{proposition}[Posterior consistency, stationary regime]
\label{prop:consistency}
Fix $(a,k)$ and assume the utility distribution at $(a,k)$ is stationary with
finite variance and mean $\mu_u^{a,k}$. With no discount ($\gamma=1$), the
Q-posterior mean $\thetaq{a}{k}$ converges almost surely to $\mu_u^{a,k}$ as the
cell's pull count $n^{a,k}\to\infty$. The same holds for the C-posterior and
$\mu_c^{a,k}$.
\end{proposition}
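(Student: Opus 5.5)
The plan is to reduce the statement to the strong law of large numbers, using the closed form \eqref{eq:posterior-mean-compact}. With $\gamma=1$, no discounting occurs, so $n^{a,k}_t$ is exactly the number of pulls of cell $(a,k)$ up to round $t$. Likewise, $S^{a,k}_t$ is the plain sum of the utilities observed on those pulls. Index these observations by pull order as $Y_1,Y_2,\ldots$. Then after $n$ pulls
\[
\thetaq{a}{k}=\frac{\kappa_q\mu_{0,q}+\sum_{i=1}^{n}Y_i}{\kappa_q+n}
=\frac{n}{\kappa_q+n}\cdot\frac{1}{n}\sum_{i=1}^{n}Y_i+\frac{\kappa_q\mu_{0,q}}{\kappa_q+n}.
\]
The second term is deterministic and tends to $0$, and the prefactor $n/(\kappa_q+n)$ tends to $1$. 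So it suffices to show that the empirical mean $\frac1n\sum_{i\le n}Y_i$ converges almost surely to $\mu_u^{a,k}$.

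The key step is to justify that $(Y_i)$ is i.i.d.\ with mean $\mu_u^{a,k}$ and finite variance, even though the pull times are chosen adaptively by the policy. I would use the standard ``reward tape'' (stack-of-rewards) construction. By (A1) and stationarity, we may realize the environment by drawing, for each cell, an i.i.d.\ sequence of outcomes in advance. The $i$-th pull of cell $(a,k)$ then reveals the $i$-th entry of that sequence. This coupling leaves the joint law of the trajectory unchanged. The reason is that, conditional on the history and on the event that $(a,k)$ is chosen, the next observation has the stationary cell distribution and is independent of the past. That is exactly what (A1) asserts after averaging over the exogenous task draw within the bucket.

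Under this coupling, $(Y_i)$ is a fixed i.i.d.\ sequence with finite variance, so Kolmogorov's strong law gives $\frac1n\sum Y_i\to\mu_u^{a,k}$ almost surely. The statement conditions on $n^{a,k}\to\infty$, so the convergence holds on that event along the actual pull counts.

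An alternative route avoids the coupling. It treats $Y_i-\mu_u^{a,k}$ as a martingale difference sequence with respect to the filtration generated at pull times, which have bounded conditional second moments. The martingale strong law, via Kronecker's lemma applied to $\sum_i (Y_i-\mu)/i$, which converges in $L^2$, then gives the same conclusion. The main obstacle is precisely this adaptivity issue, and I would present the coupling argument and mention the martingale version as a check.

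The C-posterior statement follows verbatim. Replace $\kappa_q,\mu_{0,q}$ with $\kappa_c,\bar c_a$, and replace $S^{a,k}$ with $S_c^{a,k}$. The finite variance of the cost observations follows from the sub-Gaussianity in (A2), or simply from the assumed finite variance. Since each pull supplies both observations, the shared count $n^{a,k}$ diverges for both posteriors simultaneously.
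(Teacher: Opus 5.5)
Your proposal is correct and follows the same route as the paper. It rewrites the undiscounted posterior mean via Eq.~\eqref{eq:posterior-mean-compact}, notes that the prior terms vanish since $\kappa_q/n^{a,k}_t\to0$, applies the strong law of large numbers to the cell's observations, and handles the C-posterior verbatim. The only difference is that the paper simply asserts the within-cell observations are i.i.d.\ by (A1). Your reward-tape coupling (or the martingale strong law) makes that step rigorous despite the adaptively chosen pull times, which is a welcome tightening rather than a different argument.
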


\begin{proof}
Let $\{\ut^{(i)}\}_{i=1}^n$ be the utility observations in cell $(a,k)$ after
$n$ pulls. By (A1), observations within this stationary cell are i.i.d.; the
strong law of large numbers therefore gives
$S^{a,k}_t/n^{a,k}_t=n^{-1}\sum_i\ut^{(i)}\to\mu_u^{a,k}$ almost surely.
Dividing the first expression in Eq.~\eqref{eq:posterior-mean-compact} by
$n^{a,k}_t$ in numerator and denominator gives
$\thetaq{a}{k}\to\mu_u^{a,k}$ because $\kappa_q/n^{a,k}_t\to0$. The cost
posterior is identical with $\bar c_a,{S_c}^{a,k}_t,\kappa_c$ in place of
$\mu_{0,q},S^{a,k}_t,\kappa_q$.
\end{proof}

\subsection{Lemma: discounted effective sample size}
\label{app:lemma-ess}

\begin{lemma}[Effective sample size]
\label{lem:ess}
For any $\gamma\in(0,1)$ and any sequence of pulls of cell $(a,k)$,
\[
n^{a,k}_t \leq \frac{1-\gamma^{t+1}}{1-\gamma}\leq\frac{1}{1-\gamma}.
\]
The first bound is tight when $(a,k)$ is pulled at every round.
\end{lemma}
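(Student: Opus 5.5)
The plan is to unroll the discounted recursion for the effective count and bound the resulting geometric sum term by term. From Eq.~\eqref{eq:discounted-statistics} and Eq.~\eqref{eq:chosen-cell-update}, the post-feedback count obeys
\[
n^{a,k}_s=\gamma\,n^{a,k}_{s-1}+\mathbb{1}\{(a_s,k_s)=(a,k)\},
\]
with initialization $n^{a,k}_{-1}=0$ as in Algorithm~\ref{alg:padct}. Induction on $s$ then gives the closed form
\[
n^{a,k}_t=\sum_{s=0}^{t}\gamma^{\,t-s}\,\mathbb{1}\{(a_s,k_s)=(a,k)\}.
\]
This representation is purely deterministic. It holds for any sequence of pulls, however adaptively or randomly the policy chooses them, because the recursion does not depend on the observed utilities or costs.

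Each indicator lies in $\{0,1\}$ and each weight $\gamma^{t-s}$ is nonnegative. Replacing every indicator by $1$ therefore gives an upper bound:
\[
n^{a,k}_t\le\sum_{s=0}^{t}\gamma^{\,t-s}=\sum_{j=0}^{t}\gamma^{j}=\frac{1-\gamma^{t+1}}{1-\gamma}.
\]
The last equality is the finite geometric sum, which is valid since $\gamma\in(0,1)$. The second inequality in the lemma follows because $\gamma^{t+1}>0$, so $1-\gamma^{t+1}<1$.

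For tightness, suppose $(a,k)$ is pulled at every round. Then every indicator equals $1$, so the first inequality holds with equality. For the cell to be pulled every round, the bucket must satisfy $k_s=k$ throughout and the policy must select $a$ every time. The lemma's statement already presupposes such a sequence, so no further argument is needed.

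There is no substantive obstacle here. The only point needing care is the round-indexing convention: the discount is applied at the start of each round, before the increment, and rounds are indexed from $0$. This matters because it determines whether the exponent is $t+1$ or $t$. With $n_{-1}=0$ and the first update at $t=0$, the sum has $t+1$ terms, which matches the stated bound.

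The selection-time count $\bar n^{a,k}_t=\gamma n^{a,k}_{t-1}$ satisfies the same bounds a fortiori. By the result for $t-1$, it is at most $\gamma(1-\gamma^{t})/(1-\gamma)\le(1-\gamma^{t+1})/(1-\gamma)$, since $\gamma-\gamma^{t+1}\le 1-\gamma^{t+1}$.
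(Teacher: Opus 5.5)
Your proposal is correct and follows essentially the same route as the paper: you write $n^{a,k}_t$ as the discounted sum $\sum_{s}\gamma^{t-s}$ over the rounds in which the cell was pulled, then bound it by the full geometric sum $\sum_{j=0}^{t}\gamma^j=(1-\gamma^{t+1})/(1-\gamma)$. Your explicit recursion with $n^{a,k}_{-1}=0$, the tightness check, and the remark on $\bar n^{a,k}_t$ only spell out details that the paper's one-line proof leaves implicit.
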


\begin{proof}
If $\mathcal{T}^{a,k}_t\subseteq\{0,\ldots,t\}$ is the set of pull rounds for
cell $(a,k)$, the round-based discount rule gives
$n^{a,k}_t=\sum_{s\in\mathcal{T}^{a,k}_t}\gamma^{t-s}
\leq\sum_{j=0}^t\gamma^j=(1-\gamma^{t+1})/(1-\gamma)$.
\end{proof}

\subsection{Stationary analysis}
\label{app:proof-stationary}

We now turn from posterior properties to the stationary decision rule. Under
the well-specified product-Gaussian model and clean-score restriction,
\padct{} induces a two-dimensional linear Gaussian Thompson-sampling problem.

\begin{theorem}[Clean-score regret reduction to Gaussian Thompson sampling]
\label{thm:stationary-regret}
Under (A1), (A2$^\prime$), (A3$^\prime$), and $\gamma=1$, define the
\emph{policy-state PA-regret} of clean-score \padct{} against the wallet-coupled
oracle as
\begin{equation}
\label{eq:policy-state-regret}
\begin{aligned}
g_t(a) &:=
(1-\lamn)\,\mu_u^{a,k_t}
- \lamn\,\mu_c^{a,k_t}/c_{\max},\\
\Reg_T &:= \sum_{t=0}^{T-1}\big(g_t(a^\star_t)-g_t(a_t)\big),
\end{aligned}
\end{equation}
where $a^\star_t\in\arg\max_{a\in\mathcal{A}_t}g_t(a)$ at the policy's wallet
pressure $\lamn$, and $a_t$ is \padct{}'s chosen arm. Conditional on the
bucket-visit sequence, clean-score \padct{} reduces to per-bucket Gaussian
Thompson sampling with a time-varying linear context. For a bucket visited
$T_k$ times,
\begin{equation}
\label{eq:cell-regret}
\mathbb{E}\!\left[\Reg^{\mathrm{cell}\,k}_{T_k}\right]
\leq C_0\sqrt{K\,T_k\,\log T_k},
\end{equation}
for a constant $C_0$ depending on the prior and likelihood variances and on
$c_{\max}$. Aggregating across $K_b$ buckets with $\sum_kT_k=T$ gives
\begin{equation}
\label{eq:stationary-regret}
\mathbb{E}\!\left[\Reg_T\right]\leq C\sqrt{K\,K_b\,T\log T}.
\end{equation}
\end{theorem}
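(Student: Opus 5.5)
Within each bucket we recast clean-score \padct{} as linear Gaussian Thompson sampling with a two-dimensional parameter per arm, and then apply a Bayesian regret bound for Gaussian/linear TS. Fix a bucket $k$ and condition on the bucket-visit sequence, which is exogenous under (A1). For arm $a$ write $\phi_a=(\mu_u^{a,k},\mu_c^{a,k}/c_{\max})^\top$ and $z_t=(1-\lamn,\,-\lamn)^\top$. Then $g_t(a)=z_t^\top\phi_a$, and the clean score is $z_t^\top\hat\phi_a$, where $\hat\phi_a$ is the pair of Thompson samples with the cost rescaled by $1/c_{\max}$.

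Under (A2$'$), the Q- and C-posteriors are exact conjugate Gaussian posteriors with independent coordinates. With $\gamma=1$ the counts are plain pull counts, so each $\hat\phi_a$ is an exact draw from the true posterior of $\phi_a$ given the history $H_t$. The weight $z_t$ depends only on the wallet state, which is $H_t$-measurable, and so does $\mathcal A_t$. Hence, conditional on $H_t$, the chosen arm $a_t$ and the posterior-optimal arm $a^\star_t$ have the same distribution. This is the standard probability-matching property of TS with a history-measurable context. Linearity in $z_t$ is exactly why the clip must be switched off via (A3$'$): with the clip, the score is nonlinear in the cost sample and the posterior of the score is no longer Gaussian.

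We then use the Russo--Van Roy decomposition with UCB-type confidence sets
\[
U_t(a)=z_t^\top\theta_t^a+\beta_t\big\|z_t\big\|_{\Sigma_t^a},\qquad
\Sigma_t^a=\mathrm{diag}\big(\Sigma_q^{a,k},\,\Sigma_c^{a,k}/c_{\max}^2\big),
\]
with $\beta_t=O(\sqrt{\log T_k})$. Probability matching gives
\[
\mathbb E[g_t(a^\star_t)-g_t(a_t)]=\mathbb E[g_t(a^\star_t)-U_t(a^\star_t)]+\mathbb E[U_t(a_t)-g_t(a_t)].
\]
The first term is bounded by Gaussian tails: with prior-concentration choices of $\beta_t$ it sums to $O(1)$. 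By (A3) the means are bounded, so the rare tail events contribute $O(1)$ per round in total. The second term is at most $2\beta_t\mathbb E\|z_t\|_{\Sigma_t^{a_t}}$ plus a failure term. Since $\|z_t\|\le\sqrt2$ and both posterior variances are at most $c/(1+n^{a,k})$, where the constant $c$ depends on $\sigma^2_{0,q},\sigma_q^2,\sigma^2_{0,c},\sigma_c^2,c_{\max}$, the width is bounded by $C'/\sqrt{1+n^{a_t,k}_t}$. Summing over rounds with the pigeonhole bound $\sum_a\sum_{j\le n_a}j^{-1/2}\le2\sqrt{K T_k}$ gives $\mathbb E[\Reg^{\mathrm{cell}\,k}_{T_k}]\le C_0\sqrt{KT_k\log T_k}$, i.e.\ Eq.~\eqref{eq:cell-regret}.

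To aggregate, note that by definition the regret decomposes over the bucket-visit subsequences, so $\Reg_T=\sum_k\Reg^{\mathrm{cell}\,k}_{T_k}$. Using $\log T_k\le\log T$ and Cauchy--Schwarz, $\sum_k\sqrt{T_k}\le\sqrt{K_bT}$, which yields Eq.~\eqref{eq:stationary-regret}. Averaging over the bucket sequence preserves the bound.

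\textbf{Main obstacle: the coupling through $z_t$ and $\mathcal A_t$.} Both depend on past realized costs, so they depend on the policy's own actions. The argument needs them to be $H_t$-measurable, which holds, and the confidence bound must hold uniformly over $z_t$. The plan handles this with a per-arm, per-coordinate union bound over the two scalar posteriors; this uniformity is what costs the $\log T_k$ factor. A second check: the statement's guarantee is Bayesian, because probability matching needs the prior to be the true generating prior. This is supplied by the prior clause of (A2$'$), so the claim should be read as expected regret under that prior.
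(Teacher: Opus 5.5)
Your proposal is correct and follows essentially the same route as the paper. Like the paper, you identify the clean score with an exact posterior draw of the induced linear payoff $z_t^\top\phi_a$, use $H_t$-measurability of the wallet weight and of $\mathcal{A}_t$ for probability matching, apply the Russo--Van Roy decomposition, and aggregate over buckets by Cauchy--Schwarz; your explicit width bound $\|z_t\|_{\Sigma_t^{a}}\le C'/\sqrt{1+n^{a,k}_t}$ with pigeonhole just spells out the step the paper delegates to the cited $K$-armed Gaussian Thompson-sampling bound. One small fix: (A3) is not a hypothesis of the theorem and clashes with the Gaussian prior in (A2$^\prime$), but you do not need it, because under exact posterior sampling $\mathbb{E}[g_t(a_t)\mid H_t]=\mathbb{E}[z_t^\top\theta_t^{a_t}\mid H_t]$ eliminates the failure term and the first term is controlled by Gaussian posterior tail integrals alone.
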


\begin{proof}
For a visit to bucket $k$, write $w_t=\lamn$ and define the observed feature
\[
\phi_t=(1-w_t,\,-w_t/c_{\max}),\qquad
\theta^{a,k}=(\mu_u^{a,k},\,\mu_c^{a,k}).
\]
Then $g_t(a)=\phi_t^\top\theta^{a,k}$. Under (A2$^\prime$), the independent
Q/C Thompson samples used by \padct{} imply
\[
\begin{aligned}
&(1-w_t)\hatu{a}{k}-w_t\hatc{a}{k}/c_{\max}\\
&\quad\sim
\mathcal{N}\!\left(
\phi_t^\top(\thetaq{a}{k},\thetac{a}{k}),
(1-w_t)^2\Sigmaq{a}{k}
+w_t^2\Sigmac{a}{k}/c_{\max}^2
\right),
\end{aligned}
\]
which is exactly the product-Gaussian posterior sample of
$\phi_t^\top\theta^{a,k}$. This identifies the sampled score with a Gaussian
posterior draw for the induced linear payoff.

The feature $\phi_t$, bucket $k_t$, and affordable set $\mathcal{A}_t$ are
observed before sampling. Bucket visits are determined by the exogenous task
\ifdefined\technicalreportonly
stream of the benchmark setting,
\else
stream in \S\ref{sec:problem},
\fi
so conditioning on a bucket subsequence does not introduce policy-dependent
context selection; the wallet pressure and affordable set are predictable state
shared by the policy-state comparator in Eq.~\eqref{eq:policy-state-regret}.
Therefore the per-round gap is the standard contextual Thompson-sampling regret
for at most $K$ arms.

Consequently, on each bucket subsequence the clean-score policy is
mathematically equivalent to Gaussian Thompson sampling on the induced linear
payoff. Applying the Bayesian regret bound for $K$-armed Gaussian-prior
Thompson sampling \citep[Ch.~36]{lattimore2020bandit} gives
Eq.~\eqref{eq:cell-regret}; the underlying confidence-bound decomposition of
posterior sampling \citep[Prop.~1]{russo2014learning} carries over because
$\phi_t$ and $\mathcal{A}_t$ are $H_t$-measurable, so the posterior-matching
identity holds conditionally on $H_t$. Summing over buckets and using
Cauchy--Schwarz,
\[
\sum_k\sqrt{T_k\log T_k}\leq\sqrt{K_bT\log T}.
\]
This gives Eq.~\eqref{eq:stationary-regret}.
\end{proof}

\subsection{Nonstationary analysis}
\label{app:proof-adaptation}

We next isolate how exponential discounting removes stale evidence after an
abrupt change. The result permits any fixed or reward-noise-independent revisit
schedule, followed by a continuous-pull corollary with an explicit recovery
horizon.

\begin{theorem}[Adaptation rate under an abrupt change, fixed schedule]
\label{thm:adaptation}
Fix $\gamma\in(0,1)$ and cell $(a,k)$. Suppose the true utility mean shifts
between rounds $t^\star-1$ and $t^\star$, with pre-shock mean
$\mu_{\mathrm{old}}$, post-shock mean $\mu_{\mathrm{new}}$, and
$\Delta=|\mu_{\mathrm{old}}-\mu_{\mathrm{new}}|$. Consider any fixed, or
reward-noise-independent, pull schedule for this cell covering pre- and
post-shock pulls, with post-shock pull rounds
$\{s_1<\cdots<s_m\}\subseteq[t^\star,t]$. Define
\[
\begin{aligned}
A &:= \gamma^{t-t^\star+1}n^{a,k}_{t^\star-1},\\
B &:= \sum_{i=1}^m\gamma^{t-s_i},\\
\kappa_q &:= \sigma_q^2/\sigma_{0,q}^2 .
\end{aligned}
\]
Here $A$ is the discounted mass of stale pre-shock observations and $B$ is the
mass of new post-shock observations. Then the Bayesian Q-posterior mean
satisfies
\begin{equation}
\label{eq:adaptation}
\left|\,\mathbb{E}[\theta^{a,k}_{q,t}]-\mu_{\mathrm{new}}\,\right|
\leq
\frac{A\Delta+\kappa_q|\mu_{0,q}-\mu_{\mathrm{new}}|}
{\kappa_q+A+B},
\end{equation}
where the expectation is over observation noise for the fixed schedule.
\end{theorem}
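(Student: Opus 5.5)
Because the pull schedule is fixed (or independent of the reward noise), the discounted count $n^{a,k}_t$ and the weights $\gamma^{t-s}$ are deterministic. Only the discounted utility sum $S^{a,k}_t$ is random. The plan is to write the Q-posterior mean in the compact form of Eq.~\eqref{eq:posterior-mean-compact},
\[
\theta^{a,k}_{q,t}=\frac{\kappa_q\mu_{0,q}+S^{a,k}_t}{\kappa_q+n^{a,k}_t},
\]
and take expectations in the numerator alone. Here I use that Eq.~\eqref{eq:posterior-q} with discounted statistics is algebraically the same as the compact identity after multiplying through by $\sigma_q^2$.

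First, I split the cell's pull history at $t^\star$. Iterating the round-based discount rule of Eq.~\eqref{eq:discounted-statistics} and Eq.~\eqref{eq:chosen-cell-update}, as in the proof of Lemma~\ref{lem:ess}, gives two decompositions. The count is
\[
n^{a,k}_t=\gamma^{t-t^\star+1}n^{a,k}_{t^\star-1}+\sum_{i=1}^m\gamma^{t-s_i}=A+B.
\]
The utility sum is
\[
S^{a,k}_t=\gamma^{t-t^\star+1}S^{a,k}_{t^\star-1}+\sum_i\gamma^{t-s_i}u_{s_i}.
\]
The exponent $t-t^\star+1$ has to be checked against the convention that discounting happens at the start of each round. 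I expect this to match the definition of $A$, since every round from $t^\star$ through $t$ applies one factor $\gamma$ to the stale mass.

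Second, I take expectations. By (A1), each pre-shock observation has mean $\mu_{\mathrm{old}}$ and each post-shock observation has mean $\mu_{\mathrm{new}}$. Schedule independence lets me pass the expectation through the deterministic weights, so
\[
\mathbb{E}[S^{a,k}_t]=A\mu_{\mathrm{old}}+B\mu_{\mathrm{new}}.
\]
This uses only means, so (A2) is not even needed beyond finiteness.

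Third, I subtract $\mu_{\mathrm{new}}$ and write it over the same denominator $\kappa_q+A+B$:
\[
\mathbb{E}[\theta^{a,k}_{q,t}]-\mu_{\mathrm{new}}
=\frac{\kappa_q(\mu_{0,q}-\mu_{\mathrm{new}})+A(\mu_{\mathrm{old}}-\mu_{\mathrm{new}})}{\kappa_q+A+B}.
\]
The triangle inequality then yields Eq.~\eqref{eq:adaptation}.

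The main obstacle is justifying that the weights are non-random. Under the actual adaptive Thompson policy the schedule depends on past rewards, and then $\mathbb{E}[S/n]\neq\mathbb{E}[S]/\mathbb{E}[n]$. This is exactly why the theorem assumes a fixed or reward-noise-independent schedule. I would state explicitly that conditioning on the schedule makes $n^{a,k}_t$ measurable with respect to the conditioning $\sigma$-field, so the denominator factors out of the expectation.
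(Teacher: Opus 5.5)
Your proposal is correct and matches the paper's proof step for step. You write the posterior mean in the compact Normal--Normal form, use the fixed schedule to make $n^{a,k}_t=A+B$ deterministic with $\mathbb{E}[S^{a,k}_t]=A\mu_{\mathrm{old}}+B\mu_{\mathrm{new}}$, subtract $\mu_{\mathrm{new}}$ over the common denominator $\kappa_q+A+B$, and apply the triangle inequality; your explicit check that the exponent $t-t^\star+1$ matches the start-of-round discounting convention is a detail the paper leaves implicit, and it does come out as you expect.
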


\begin{remark}[Two regimes]
\label{rem:adapt-regimes}
The bound~\eqref{eq:adaptation} separates a prior term from a stale-data term.
When $\kappa_q\gg A+B$, the prior controls the bias. When
$\kappa_q\ll A+B$, the leading term is $\Delta A/(A+B)$, which is reduced by
discounting and by new post-shock pulls.
\end{remark}

\begin{corollary}[Recovery horizon, data-dominated regime]
\label{cor:recovery}
Suppose $(a,k)$ is pulled at every post-shock round, so
$\ell:=t-t^\star+1=m$, and assume the data-dominated regime
$\kappa_q\ll A+B$. Then
\[
\left|\,\mathbb{E}[\theta^{a,k}_{q,t}]-\mu_{\mathrm{new}}\,\right|
\lesssim
\Delta\gamma^\ell+O\!\big(\kappa_q/(A+B)\big),
\]
and the leading stale-data term is at most $\delta$ once
\[
\ell\geq\frac{\log(\Delta/\delta)}{\log(1/\gamma)}
=O\!\big(\log(\Delta/\delta)/(1-\gamma)\big).
\]
\end{corollary}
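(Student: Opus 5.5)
I would derive the corollary directly from Theorem~\ref{thm:adaptation} by specializing the schedule. With the cell pulled at every post-shock round, the post-shock pull rounds are $s_i=t^\star+i-1$ for $i=1,\ldots,m$ and $m=\ell$. The new-data mass is then a geometric sum,
\[
B=\sum_{j=0}^{\ell-1}\gamma^{j}=\frac{1-\gamma^{\ell}}{1-\gamma}.
\]
The stale mass is $A=\gamma^{\ell}n^{a,k}_{t^\star-1}$. By Lemma~\ref{lem:ess}, $n^{a,k}_{t^\star-1}\le 1/(1-\gamma)$, so
\[
A\le \frac{\gamma^\ell}{1-\gamma}.
\]
This yields the key ratio bound
\[
\frac{A}{A+B}\le\frac{\gamma^\ell/(1-\gamma)}{\gamma^\ell/(1-\gamma)+(1-\gamma^\ell)/(1-\gamma)}=\gamma^\ell .
\]
Here I use that $x\mapsto x/(x+B)$ is increasing in $x$, so the upper bound on $A$ may be substituted.

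Next I split the right-hand side of Eq.~\eqref{eq:adaptation} into a stale-data term and a prior term. Since $\kappa_q+A+B\ge A+B$, the stale-data term is bounded by $A\Delta/(A+B)\le\Delta\gamma^\ell$. The prior term is $\kappa_q|\mu_{0,q}-\mu_{\mathrm{new}}|/(\kappa_q+A+B)$. By (A3) and the fixed prior mean, $|\mu_{0,q}-\mu_{\mathrm{new}}|$ is bounded by a constant. The prior term is therefore $O(\kappa_q/(A+B))$, which is the second summand claimed. The data-dominated hypothesis $\kappa_q\ll A+B$ only serves to make this term secondary, consistent with Remark~\ref{rem:adapt-regimes}.

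For the horizon, I require $\Delta\gamma^\ell\le\delta$, i.e.\ $\ell\log(1/\gamma)\ge\log(\Delta/\delta)$. When $\Delta\le\delta$ the condition holds trivially. Finally I convert using $\log(1/\gamma)\ge 1-\gamma$, which follows from $\log x\le x-1$ applied to $x=\gamma$. This gives the stated $O(\log(\Delta/\delta)/(1-\gamma))$ sufficient horizon.

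The main subtlety is the ratio bound $A/(A+B)\le\gamma^\ell$, which needs the pre-shock effective count controlled through Lemma~\ref{lem:ess}. In particular, the lemma's bound must be applied at round $t^\star-1$ and combined with monotonicity, rather than bounding $A$ and $B$ separately. If the implementation's discount timing shifts the exponent by one, as in the $\gamma^{t-t^\star+1}$ in $A$, I would simply track it; it changes the result by at most a factor $\gamma^{\pm1}$, absorbed in the asymptotic statement.
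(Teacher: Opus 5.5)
Your proposal is correct and follows the paper's proof step for step: the geometric sum for $B$, $A=\gamma^\ell n^{a,k}_{t^\star-1}$ bounded via Lemma~\ref{lem:ess}, the ratio bound $A/(A+B)\le\gamma^\ell$, substitution into Eq.~\eqref{eq:adaptation}, and solving $\Delta\gamma^\ell\le\delta$. You also make explicit several steps the paper leaves implicit: monotonicity of $x\mapsto x/(x+B)$, dropping $\kappa_q$ from the denominator, boundedness of $|\mu_{0,q}-\mu_{\mathrm{new}}|$, and $\log(1/\gamma)\ge 1-\gamma$. Your closing worry about an exponent shift is unnecessary, since under the round-based discount rule $A=\gamma^{t-t^\star+1}n^{a,k}_{t^\star-1}=\gamma^\ell n^{a,k}_{t^\star-1}$ holds exactly.
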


\begin{proof}[Proof of Theorem~\ref{thm:adaptation}]
Because the entire pull schedule is fixed or independent of the reward noise,
conditioning on the pull times does not change the mean of the included
observations; in particular, the pre-shock mass $n^{a,k}_{t^\star-1}$ is
determined by the schedule. From Eq.~\eqref{eq:posterior-mean-compact},
\[
\theta^{a,k}_{q,t}=\frac{\kappa_q\mu_{0,q}+S^{a,k}_t}
{\kappa_q+n^{a,k}_t}.
\]
Splitting the discounted mass into pre- and post-shock pulls gives
$n^{a,k}_t=A+B$ and
$\mathbb{E}[S^{a,k}_t]=A\mu_{\mathrm{old}}+B\mu_{\mathrm{new}}$. Hence
\[
\mathbb{E}[\theta^{a,k}_{q,t}]-\mu_{\mathrm{new}}
=
\frac{\kappa_q(\mu_{0,q}-\mu_{\mathrm{new}})
+A(\mu_{\mathrm{old}}-\mu_{\mathrm{new}})}
{\kappa_q+A+B}.
\]
Taking absolute values and using
$|\mu_{\mathrm{old}}-\mu_{\mathrm{new}}|=\Delta$ gives
Eq.~\eqref{eq:adaptation}.
\end{proof}

\begin{proof}[Proof of Corollary~\ref{cor:recovery}]
If the cell is pulled at every post-shock round, then
$B=\sum_{j=0}^{\ell-1}\gamma^j=(1-\gamma^\ell)/(1-\gamma)$ and
$A=\gamma^\ell n^{a,k}_{t^\star-1}$. By Lemma~\ref{lem:ess},
$n^{a,k}_{t^\star-1}\leq1/(1-\gamma)$, so
\[
\frac{A}{A+B}
\leq
\frac{\gamma^\ell/(1-\gamma)}
{\gamma^\ell/(1-\gamma)+(1-\gamma^\ell)/(1-\gamma)}
=\gamma^\ell .
\]
Substituting this into Eq.~\eqref{eq:adaptation} yields the displayed bound;
solving $\Delta\gamma^\ell\leq\delta$ gives the stated horizon.
\end{proof}

\begin{remark}[Cost-posterior analogue]
The same calculation applies to realized-cost shocks: replace
$(\mu_{\mathrm{old}},\mu_{\mathrm{new}},\mu_{0,q},\kappa_q)$ with
$(\mu^c_{\mathrm{old}},\mu^c_{\mathrm{new}},\bar c_a,\kappa_c)$ and use the
C-posterior statistic under the same locked discount $\gamma$. This is the
algebra behind S3-style price changes once the changed arm is pulled again;
we state the utility version to avoid duplicating notation.
\end{remark}

\subsection{Scope of the formal analysis}
\label{app:proof-limitations}

The results above are component-level formal support, not a complete regret
theorem for the full replay system. The stationary regret reduction assumes a
well-specified product-Gaussian model, $\gamma=1$, and the clean score with
inactive cost-sample clipping; the implemented clipped scorer coincides with
this policy on trajectories where the clip is inactive. The adaptation result
is conditional on a fixed, noise-independent pull schedule: it explains how discounted
posteriors forget stale evidence after a changed arm is revisited, but it does
not prove an unconditional Thompson-sampling revisit rate or a cumulative
piecewise-stationary regret bound. Finally, the theorem's comparator uses the
policy's own wallet pressure and affordable set, whereas the empirical
\emph{True Oracle} has its own wallet trajectory.

\section{Detailed Ablation Study}
\label{appendix:ablations}

This appendix reports the full metrics behind the component-role analysis in
\ifdefined\technicalreportonly
the main report.
\else
\S\ref{sec:eval-ablations} of the main paper.
\fi
The four primary variants each modify one component of Full \padct{}:
$-P$ removes payment-aware ranking and scores providers by sampled utility
alone; $-D$ removes discounting ($\gamma=1$); $-C$ removes contextual
bucketing by collapsing the four task-type buckets into one; and
$-\mathrm{TS}$ replaces Thompson samples with posterior means.
The additional price-shock diagnostic $-C_{\mathrm{post}}$ retains wallet
pressure but pins the cost estimate to the specification cost $\bar c_a$ and
is evaluated only in S3.
PA-gap/$T$ and full-horizon mean quality are reported as endpoint metrics,
whereas ROI and AdaptT serve as diagnostics. AdaptT is the number of rounds
after the shock until the first trailing-$200$ ROI window reaches $95\%$ of
pre-shock ROI in S2 or $110\%$ in S3. We average AdaptT over seeds that cross
the threshold and report the corresponding hit count out of~$30$.

\begin{table*}[t]
\centering
\small
\setlength{\tabcolsep}{4pt}
\begin{tabular*}{\textwidth}{@{\extracolsep{\fill}}llcccc@{}}
\toprule
Scenario & Variant & \shortstack{PA-gap/$T$\\$\downarrow$} & ROI $\uparrow$ & \shortstack{Mean quality\\$\uparrow$} & \shortstack{AdaptT\\(rounds) $\downarrow$} \\
\midrule
S1 & Full & $0.133{\pm}0.004$ & $378$ & $0.797$ & \texttt{n/a} \\
S1 & $-P$ & $0.855{\pm}0.058$ & $111$ & $0.555^{\ddagger}$ & \texttt{n/a} \\
S1 & $-D$ & $0.109{\pm}0.008$ & $412$ & $0.810$ & \texttt{n/a} \\
S1 & $-C$ & $0.116{\pm}0.003$ & $390$ & $0.812$ & \texttt{n/a} \\
S1 & $-\mathrm{TS}$ & $0.130{\pm}0.046$ & $535$ & $0.740$ & \texttt{n/a} \\
\midrule
S2 & Full & $0.166{\pm}0.006$ & $357$ & $0.761$ & $1{,}467$ (30/30) \\
S2 & $-P$ & $0.878{\pm}0.050$ & $103$ & $0.517^{\ddagger}$ & $1{,}611$ (14/30) \\
S2 & $-D$ & $0.170{\pm}0.012$ & $399$ & $0.745$ & $2{,}249$ (30/30) \\
S2 & $-C$ & $0.158{\pm}0.006$ & $368$ & $0.761$ & $1{,}176$ (30/30) \\
S2 & $-\mathrm{TS}$ & $0.175{\pm}0.048$ & $537$ & $0.687$ & $1{,}125$ (30/30) \\
\midrule
S3 & Full & $0.121{\pm}0.004$ & $429$ & $0.831$ & $200$ (30/30) \\
S3 & $-P$ & $0.259{\pm}0.036$ & $351$ & $0.840$ & $200$ (30/30) \\
S3 & $-D$ & $0.114{\pm}0.009$ & $426$ & $0.838$ & $279$ (30/30) \\
S3 & $-C$ & $0.109{\pm}0.003$ & $425$ & $0.848$ & $398$ (29/30) \\
S3 & $-\mathrm{TS}$ & $0.145{\pm}0.025$ & $552$ & $0.742$ & $2{,}232$ (24/30) \\
S3 & $-C_{\mathrm{post}}$ & $0.144{\pm}0.005$ & $423$ & $0.797$ &
$200$ (30/30) \\
\bottomrule
\end{tabular*}
\caption{Full component-ablation metrics on 402Pilot-Bench. PA-gap/$T$ is
mean$\pm$SD over $30$ paired seeds; ROI and quality entries are means.
AdaptT is the mean elapsed rounds over hit seeds (hits/30), is lower-bounded
by $200$, and is a diagnostic rather than an endpoint.
${}^{\ddagger}$ indicates that all $30$ seeds exhausted the wallet before
$T$; \texttt{n/a} denotes a scenario without a shock.}
\label{tab:ablations-full}
\end{table*}

Two reading notes accompany Table~\ref{tab:ablations-full}. First, the $-P$
row in S2 reports AdaptT over only the $14/30$ seeds that reached the ROI
threshold before exhausting the wallet. Its reported AdaptT is therefore
conditional on the successful subset and should not be compared directly
with the full-hit rows. Second, $-P$ and Contextual DS-TS in
Table~\ref{tab:main} implement the same wallet-pressure-free selection rule
through independent code paths and RNG streams. Their PA-gap/$T$ values differ
by at most $0.007$ across scenarios, providing an implementation-level
consistency check.

\section{Sensitivity and Robustness Checks}
\label{appendix:sensitivity}

This appendix reports the sensitivity checks behind the locked
\ifdefined\technicalreportonly
configuration used in the main evaluation.
\else
configuration in \S\ref{sec:eval-setup} of the main paper.
\fi
We first sweep the four exposed selection scalars one at a time on S1, use the
ablation panel for the cross-scenario discount check, and treat posterior
variances, $T$, and $B$ as fixed calibration constants rather than per-scenario
tuning knobs. We then examine comparator-specific sweeps, the SW-TS diagnostic,
and paired-seed statistical checks. Throughout, the goal is to verify that the
locked configuration is a reasonable calibration, not to search for a better
one.
\ifdefined\technicalreportonly\else
Comparator implementation details remain in
Appendix~\ref{sec:app-extended-baselines}.
\fi

\subsection{Targeted S1 sensitivity sweep}
\label{app:s1-sensitivity}

Each non-default row in Table~\ref{tab:s1-sensitivity} changes one scalar
while holding the others fixed, using $10$ seeds at $T=10{,}000$. The True
Oracle is recomputed for each cell with the same $\nu$ and $c_{\max}$ as the
tested policy, so PA-gap/$T$ remains comparable within the sweep.

\begin{table*}[t]
\centering
\small
\setlength{\tabcolsep}{5pt}
\begin{tabular*}{\textwidth}{@{\extracolsep{\fill}}lrrrr@{}}
\toprule
Setting & $\bar q_T$ & Budget \% & ROI & PA-gap/$T$ \\
\midrule
\textsc{default}       & $0.796{\pm}0.001$ & $42.2{\pm}0.3$ & $377.5{\pm}2.1$ & $0.1328{\pm}0.0013$ \\
\midrule
$\gamma=0.99$      & $0.753{\pm}0.001$ & $49.8{\pm}0.2$ & $302.4{\pm}1.3$ & $0.2087{\pm}0.0013$ \\
$\gamma=0.9995$    & $0.801{\pm}0.002$ & $40.8{\pm}0.4$ & $392.6{\pm}3.3$ & $0.1218{\pm}0.0016$ \\
$\nu=0.25$         & $0.795{\pm}0.001$ & $42.1{\pm}0.4$ & $377.6{\pm}3.0$ & $0.1329{\pm}0.0018$ \\
$\nu=1.0$          & $0.797{\pm}0.001$ & $42.1{\pm}0.3$ & $378.7{\pm}2.3$ & $0.1324{\pm}0.0014$ \\
$\alpha=1.0$       & $0.788{\pm}0.002$ & $36.7{\pm}0.3$ & $430.3{\pm}3.1$ & $0.1045{\pm}0.0011$ \\
$\alpha=4.0$       & $0.802{\pm}0.001$ & $54.7{\pm}0.4$ & $293.5{\pm}2.0$ & $0.1781{\pm}0.0016$ \\
$c_{\max}=0.005$   & $0.787{\pm}0.001$ & $41.9{\pm}0.4$ & $375.9{\pm}3.0$ & $0.1567{\pm}0.0028$ \\
$c_{\max}=0.02$    & $0.800{\pm}0.001$ & $50.5{\pm}0.3$ & $316.8{\pm}1.9$ & $0.1490{\pm}0.0021$ \\
\bottomrule
\end{tabular*}
\caption{One-at-a-time S1 sensitivity of \padct{}. Means $\pm$ SE over $10$
seeds; each non-default row perturbs a single scalar.}
\label{tab:s1-sensitivity}
\end{table*}

The sweep supports two narrow claims. First, the locked policy is not brittle:
no cell bankrupts the wallet, $\bar q_T$ stays within $[0.753,0.802]$, and
PA-gap/$T$ remains in the same order of magnitude. Second, the visible levers
are the intended ones: shorter memory worsens stationary performance, and
larger $\alpha$ spends more aggressively; $\nu$ and $c_{\max}$ have smaller
effects in the tested range.

\subsection{\texorpdfstring{Discount factor $\gamma$}{Discount factor gamma}}
\label{app:sens-gamma}

The cross-scenario check uses the $-D$ rows of
Table~\ref{tab:ablations-full}: full \padct{} sets $\gamma=0.999$, while $-D$
sets $\gamma=1$. On S1, no discount is slightly better on the stationary
endpoints, but discounting shortens AdaptT in both shock scenarios
($1{,}467$ vs.\ $2{,}249$ rounds in S2; $200$ vs.\ $279$ in S3). This is the
empirical counterpart of Cor.~\ref{cor:recovery}, where stale evidence decays
as $\gamma^\ell$ after a changed arm is revisited. The default
$\gamma=0.999$ corresponds to an effective horizon of
$1/(1-\gamma)=10^3$ rounds, chosen to retain S1 evidence while still reacting
inside the S2 outage window.

\subsection{Other locked constants}
\label{app:sens-other}

The failure penalty $\nu=0.5$ gives billed timeouts a cost beyond their zero
quality score; the sweep shows that halving or doubling it barely moves the
metrics. Likewise, the pressure schedule uses $\alpha=2$ and $\lambda_0=1$, so
$\lamn=0.5$ when spending is on plan and rises smoothly under sustained
overshoot; the sweep confirms the expected spend trade-off between $\alpha=1$
and $\alpha=4$. The normalizer $c_{\max}=\$0.01$ is the highest listed
provider price, and the half/double sweep does not change the qualitative
conclusion.

The posterior prior and likelihood variances are intentionally not swept: the
Q-posterior uses a broad mid-range prior with moderate proxy noise, while the
C-posterior uses a tight prior and likelihood because realized cost is nearly
deterministic except at the S3 price shock. We also do not sweep $T$ or $B$:
the released benchmark fixes $T=10{,}000$, and $B=\$50$ is calibrated so that
a premium-only policy hits the budget cliff in the second half of the run.

\subsection{Comparator sensitivity sweeps}
\label{app:new-baseline-sensitivity}

Table~\ref{tab:baseline-sweeps} sweeps the PM-Greedy threshold
$\theta_{\mathrm{PM}}$ and the LinCBwK-Adapt exploration coefficient $\beta$;
the main table uses the locked settings $\theta_{\mathrm{PM}}=0.7$ and
$\beta=1.0$. Reducing either $\theta_{\mathrm{PM}}$ or $\beta$ improves the
corresponding comparator's PA-gap/$T$ in every scenario. For reference,
\padct{}'s mean PA-gap/$T$ values in the main results
\ifdefined\technicalreportonly
are
\else
(Table~\ref{tab:main} of the main paper) are
\fi
$0.133$, $0.166$, and $0.121$ in S1--S3. Across these comparator sweeps,
\padct{} retains the lowest mean PA-gap/$T$ among the tested settings, with
the margin largest in S1/S2 and narrower in S3.

\begin{table}[t]
\centering
\small
\setlength{\tabcolsep}{2pt}
\begin{tabular*}{\columnwidth}{@{\extracolsep{\fill}}llrrr@{}}
\toprule
Setting & Scenario & $\bar q_T$ & Budget \% & PA-gap/$T$ \\
\midrule
\multicolumn{5}{@{}l}{\emph{PM-Greedy quality threshold $\theta_{\mathrm{PM}}$}} \\
$\theta_{\mathrm{PM}}=0.6$ & S1 & $0.776$ & $55.3$ & $0.244$ \\
$\theta_{\mathrm{PM}}=0.6$ & S2 & $0.733$ & $80.6$ & $0.364$ \\
$\theta_{\mathrm{PM}}=0.6$ & S3 & $0.793$ & $35.4$ & $0.152$ \\
$\theta_{\mathrm{PM}}=0.7$ & S1 & $0.732$ & $89.5$ & $0.542$ \\
$\theta_{\mathrm{PM}}=0.7$ & S2 & $0.627$ & $98.7$ & $0.660$ \\
$\theta_{\mathrm{PM}}=0.7$ & S3 & $0.836$ & $43.9$ & $0.198$ \\
$\theta_{\mathrm{PM}}=0.8$ & S1 & $0.667$ & $94.6$ & $0.684$ \\
$\theta_{\mathrm{PM}}=0.8$ & S2 & $0.581$ & $99.1$ & $0.763$ \\
$\theta_{\mathrm{PM}}=0.8$ & S3 & $0.836$ & $46.8$ & $0.243$ \\
\addlinespace[2pt]
\multicolumn{5}{@{}l}{\emph{LinCBwK-Adapt exploration coefficient $\beta$}} \\
$\beta=0.5$ & S1 & $0.803$ & $70.0$ & $0.312$ \\
$\beta=0.5$ & S2 & $0.773$ & $83.5$ & $0.386$ \\
$\beta=0.5$ & S3 & $0.806$ & $37.8$ & $0.164$ \\
$\beta=1.0$ & S1 & $0.824$ & $93.0$ & $0.457$ \\
$\beta=1.0$ & S2 & $0.796$ & $99.9$ & $0.515$ \\
$\beta=1.0$ & S3 & $0.836$ & $44.5$ & $0.195$ \\
$\beta=2.0$ & S1 & $0.832$ & $100.0$ & $0.515$ \\
$\beta=2.0$ & S2 & $0.800$ & $100.0$ & $0.533$ \\
$\beta=2.0$ & S3 & $0.854$ & $45.7$ & $0.197$ \\
\bottomrule
\end{tabular*}
\caption{One-at-a-time sensitivity sweeps for PM-Greedy and LinCBwK-Adapt.
Means over $30$ paired seeds.}
\label{tab:baseline-sweeps}
\end{table}

\subsection{SW-TS diagnostic}
\label{app:sw-ts}

SW-TS adapts sliding-window Thompson sampling~\citep{trovo2020sliding} to our
admissible loop. It maintains a
non-contextual Gaussian reward posterior per arm over utility observations
from the last $W=1000$ global rounds and samples one utility draw per
affordable arm. The window is indexed by global rounds: observations are
timestamped by global round index and pruned once older than $W$ rounds,
whether or not
the arm was recently pulled; if an arm's active window is empty, the posterior
reverts to the prior (mean $0.5$, variance $1.0$, likelihood variance
$0.09$). SW-TS does not learn cost and has no wallet-pressure term; the
affordable-set mask is its only budget signal.

Table~\ref{tab:sw-ts-results} reports the SW-TS endpoint metrics. For
bankrupt seeds, aggregate metrics are computed over the full horizon, with
post-bankruptcy rounds contributing zero quality and no further spend.
Without wallet pressure, SW-TS repeatedly selects high-quality expensive
arms; in S1/S2 this drains the wallet for all $30$ seeds.

\begin{table}[t]
\centering
\small
\setlength{\tabcolsep}{2pt}
\begin{tabular*}{\columnwidth}{@{\extracolsep{\fill}}lrrrrr@{}}
\toprule
Scenario & $\bar q_T$ & Budget \% & ROI & PA-gap/$T$ & Bankrupt \\
\midrule
S1 & $0.483$ & $100.0$ & $97$  & $0.981$ & $30/30$ \\
S2 & $0.466$ & $100.0$ & $93$  & $0.986$ & $30/30$ \\
S3 & $0.855$ & $52.3$  & $328$ & $0.324$ & $0/30$ \\
\bottomrule
\end{tabular*}
\caption{SW-TS appendix results. Means over $30$ paired seeds; bankruptcy is
the number of seeds that exhausted the wallet before $T=10{,}000$.}
\label{tab:sw-ts-results}
\end{table}

\subsection{Paired-seed statistical comparisons}
\label{sec:app-paired-tests}

For each scenario--metric pair, we compare \padct{} with each baseline using
the $30$ within-seed differences. Reported $p$-values use two-sided paired
normal-approximation tests. As a robustness check against skew from early
budget exhaustion, we also compute $95\%$ paired-bootstrap confidence
intervals with $10{,}000$ resamples; these intervals support the same
conclusions.

\padct{}'s S3 improvements over Always-P-mid on quality, ROI, and PA-gap/$T$
are significant (all $p<10^{-20}$), as is its S2 quality gain over
Always-P-mid ($p=1.67\times10^{-3}$). Its S1 PA-gap/$T$ disadvantage against
Always-P-mid is also significant, consistent with exploration cost. In S2,
\padct{}'s PA-gap/$T$ is not distinguishable from
Always-P-cheap ($p=0.17$) or Contextual BTS ($p=0.86$). Against PM-Greedy,
LinCBwK-Adapt, and Contextual DS-TS, its PA-gap/$T$ advantage is significant
in all three scenarios.

\section{Allocation Dynamics under Market Shocks}
\label{appendix:allocation-dynamics}

Aggregate metrics summarize the overall quality--cost trade-off but do not
show how the policy reallocates over time.
Figure~\ref{fig:allocation-dynamics} reports \padct{}'s provider-allocation
trajectories across the three market scenarios. S1 serves as a stationary
control; S2 shows withdrawal from and return to P-mid around the outage; and
S3 shows migration toward discounted P-premium. These trajectories complement
the AdaptT diagnostic in Table~\ref{tab:ablations-full} by showing when and
how provider allocation changes.

\begin{figure}[!t]
\centering
\includegraphics[width=0.82\columnwidth]{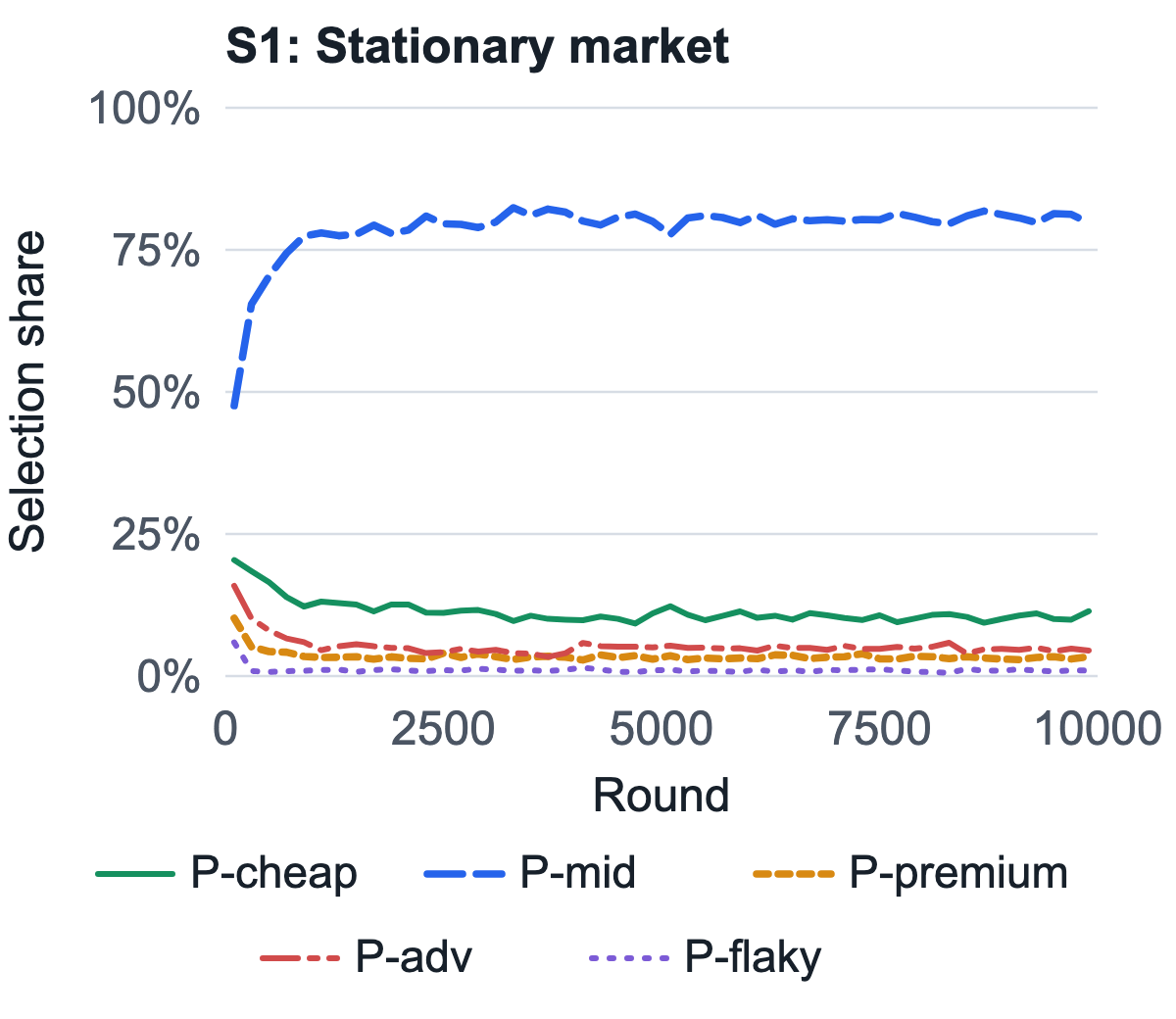}
\vspace{0.35em}
\includegraphics[width=0.82\columnwidth]{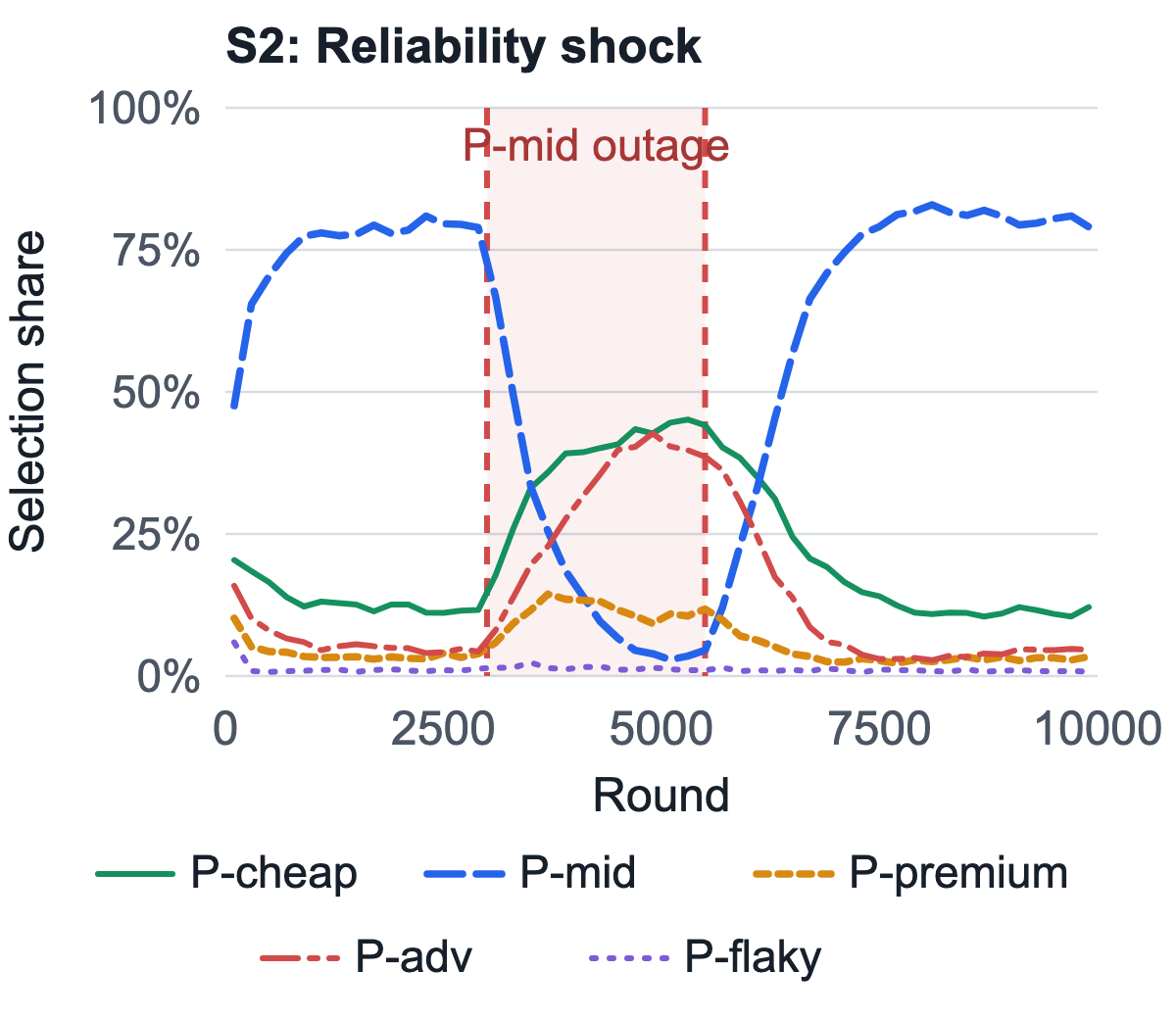}
\vspace{0.35em}
\includegraphics[width=0.82\columnwidth]{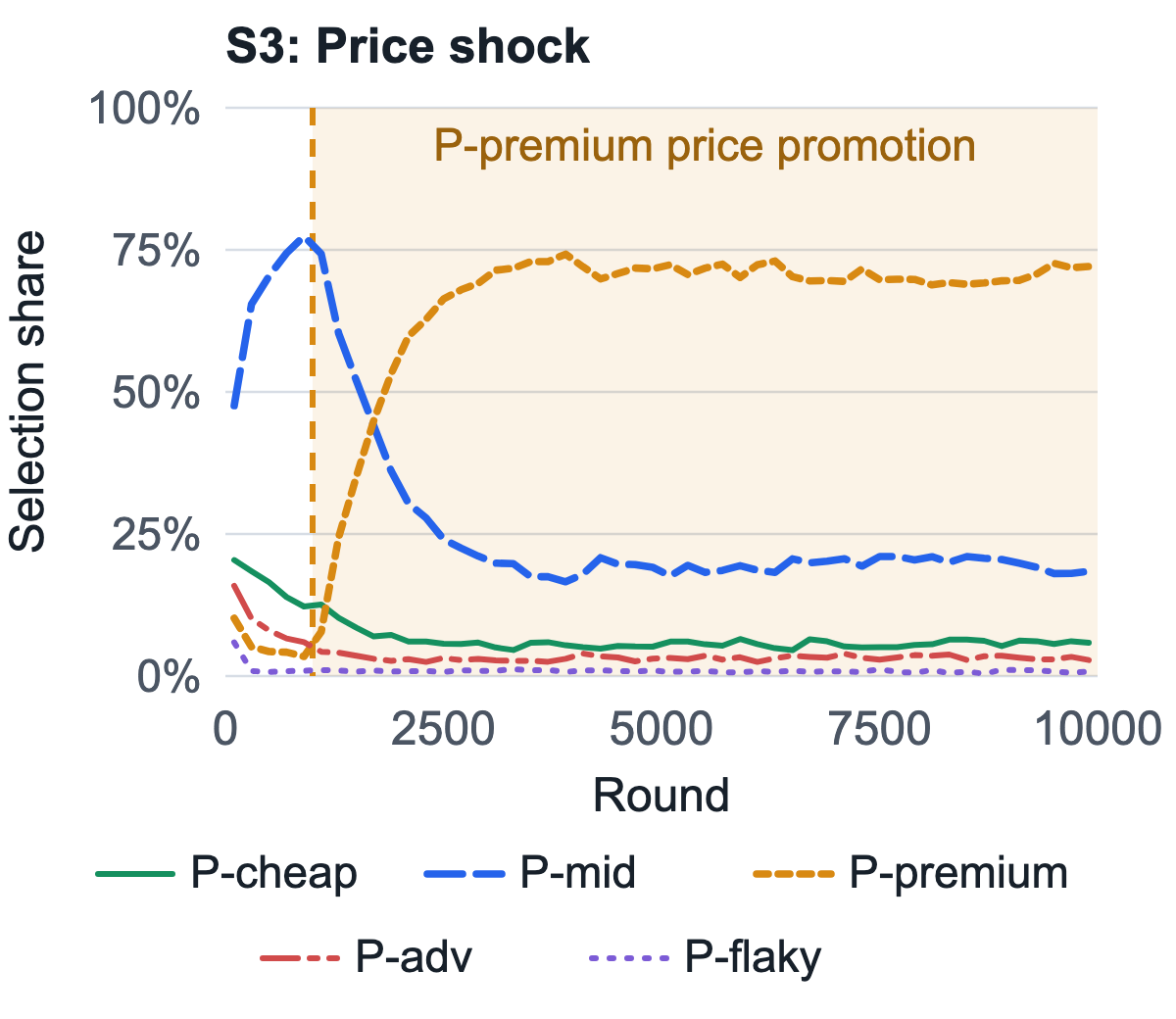}
\caption{\padct{} provider-allocation shares over $T=10{,}000$ rounds in,
from top to bottom, the stationary market (S1), reliability-shock scenario
(S2), and price-shock scenario (S3). In S1, P-mid remains dominant after
initial exploration. In S2, the policy shifts away from P-mid during its
outage and returns after recovery. In S3, it reallocates toward discounted
P-premium.}
\Description{Three vertically stacked line charts of PA-DCT provider-selection
shares in S1, S2, and S3. They show stable P-mid allocation, withdrawal from
and return to P-mid around its outage, and migration toward P-premium after
its price promotion, respectively.}
\label{fig:allocation-dynamics}
\end{figure}

\section{Extended Comparators}
\label{sec:app-extended-baselines}

\subsection{Comparator Implementations}

This section describes how each comparator is implemented in
\emph{402Pilot-Bench} under the common frozen-replay evaluation protocol.
Where necessary, standard algorithms are minimally adapted to operate under
replay while preserving their original decision principles. All non-oracle
implementations receive the same task context and hard affordable-set mask.
After each decision, the loop reveals only the chosen provider's
failure-penalized utility and realized receipt cost; each implementation uses
only the signals specified below. No comparator observes counterfactual cached
outcomes, and no learning comparator uses a forced round-robin warm start.

\paragraph{Random and fixed-arm policies.}
We implement Random by sampling uniformly from the affordable set.
Always-P-cheap, Always-P-mid, and Always-P-premium return their named provider
when it is affordable and otherwise return the first provider in the runtime's
fixed affordable-set ordering. These policies ignore context and replay
feedback.

\paragraph{BudgetRule.}
We implement BudgetRule as a deterministic wallet-threshold policy. It selects
P-premium when the remaining-budget fraction is above \(0.50\), P-mid when it
lies in \((0.20,0.50]\), and P-cheap otherwise. If the target tier
is unavailable, it selects the affordable provider with the highest catalog
price. It ignores context and replay feedback.

\paragraph{Contextual DS-TS.}
In our implementation, Contextual DS-TS adapts discounted Thompson
sampling~\citep{qi2023dsts} to four task buckets and continuous utility
feedback. It maintains one Gaussian utility posterior per
provider--task-bucket cell, discounts every cell's sufficient statistics once
per replay round, and samples one utility value for each affordable provider
in the current bucket. Only the chosen cell is updated. Receipt cost and
wallet state do not enter the rule beyond the runtime's affordable-set mask.

\paragraph{Contextual BTS.}
In our replay implementation, Contextual BTS adapts budgeted Thompson
sampling~\citep{xia2015bts} to four task buckets and continuous utility and
cost feedback. It maintains stationary Gaussian utility and cost posteriors
for each provider--task-bucket cell, with each cost prior centered at that
provider's catalog price. For every affordable provider, it samples both
quantities and selects the largest sampled utility-to-cost ratio; the sampled
cost is floored at \(10^{-6}\) USDC. Only the chosen cell is updated from its
utility and receipt cost, and wallet state enters only through affordability.

\paragraph{PM-Greedy.}
\label{app:pm-greedy}
PM-Greedy is implemented as a price-metadata router inspired by
cheapest-sufficient routing~\citep{chen2023frugalgpt}. For each
provider--task-bucket cell, it keeps a per-pull window of the \(500\) most
recent observed utilities; a cell with no observations yet uses the
optimistic estimate \(1\). Among
affordable providers whose mean utility is at least \(0.7\), it selects the
one with the lowest scenario-aware posted price, breaking price ties by higher
estimated utility. If none clears the threshold, it selects the highest-utility
provider, breaking ties by lower posted price. Receipt cost is not used for
learning, and the affordable-set mask is the only wallet signal.

\paragraph{LinCBwK-Adapt.}
\label{app:lincbwk}
Our replay adaptation of LinCBwK~\citep{agrawal2016linearcbwk} is an
admissible empirical comparator rather than a reproduction of the full
theoretical algorithm. With the four-dimensional one-hot task context, its
linear reward and cost models reduce to stationary Gaussian/ridge estimates
for each provider--task-bucket cell. It normalizes receipt cost by
\(c_{\max}=\$0.01\) and scores affordable providers by
\(\mathrm{UCB}_r-\mu\,\mathrm{LCB}_c\). Before each selection, \(\mu\) is
increased by \(0.01\) when the remaining-budget fraction is below the
remaining-horizon fraction and decreased by \(0.01\) otherwise, subject to
the range \([0,100]\). Only the chosen cell's utility and normalized receipt
cost are updated; no temporal forgetting is applied.

\paragraph{Replay True Oracle.}
We implement the replay True Oracle with an independent wallet trajectory. For
the same replay task and cached-response version, it inspects the realized
outcome of every provider affordable under that wallet, evaluates each using
the oracle wallet's current payment pressure, and selects the provider with
the highest payment-aware reward. Only the selected receipt is charged. This
counterfactual access makes the oracle unattainable and suitable only as an
upper bound.

\subsection{Comparator Configurations}
\label{app:comparator-configurations}

Table~\ref{tab:comparator-configurations} reports the fixed configurations of
comparators with explicit policy parameters, including the appendix-only
SW-TS diagnostic of Appendix~\ref{app:sw-ts}.

\begin{table*}[!t]
\centering
\small
\setlength{\tabcolsep}{5pt}
\begin{tabularx}{\textwidth}{
    @{}
    >{\raggedright\arraybackslash}p{0.20\textwidth}
    >{\raggedright\arraybackslash}X
    @{}
}
\toprule
Comparator & Configuration \\
\midrule

BudgetRule
& High and low remaining-budget thresholds are \(0.50\) and \(0.20\),
respectively. \\

Contextual DS-TS
& \(K_b=4\); Q-prior mean, prior variance, and likelihood variance
\((0.5,1.0,0.09)\); wall-clock discount \(\gamma=0.999\). \\

Contextual BTS
& \(K_b=4\); Q settings \((0.5,1.0,0.09)\); C-prior mean \(\bar c_a\);
C-prior and likelihood variances \((10^{-4},10^{-6})\); \(\gamma=1\);
sampled-cost floor \(10^{-6}\) USDC. \\

PM-Greedy
& \(K_b=4\); per-pull window \(W=500\); utility threshold
\(\theta_{\mathrm{PM}}=0.7\); optimistic initial estimate
\(\hat u_{\mathrm{init}}=1\). \\

LinCBwK-Adapt
& \(K_b=4\); reward and normalized-cost prior means \(0.5\);
corresponding prior and likelihood variances \(0.09\); \(\gamma=1\);
\(c_{\max}=\$0.01\); \(\beta=1.0\); dual step \(\eta=0.01\) and range
\([0,100]\); cost-LCB floor \(10^{-6}\). \\

SW-TS diagnostic
& Non-contextual; wall-clock window \(W=1000\); reward-prior mean and
variance \((0.5,1.0)\); likelihood variance \(0.09\). \\

\bottomrule
\end{tabularx}
\caption{Comparator configurations used in the reported experiments.
Parameters are fixed across scenarios and random seeds unless explicitly
varied in Appendix~\ref{appendix:sensitivity}.}
\label{tab:comparator-configurations}
\end{table*}

\paragraph{Alignment and numerical guards.}
Contextual DS-TS shares \padct{}'s Q-prior, likelihood variance, and discount.
Contextual BTS shares \padct{}'s Q/C prior settings and likelihood variances
but removes discounting, reducing differences due solely to prior
calibration. LinCBwK-Adapt instead uses narrower ridge-style prior variances
and does not seed its normalized-cost prior from catalog prices. The
\(10^{-6}\) cost floors in Contextual BTS and LinCBwK-Adapt prevent sampled
or lower-confidence costs near zero from destabilizing their selection
scores.

\section{x402 Integration Witness}
\label{appendix:x402-witness}

The released artifact includes a local x402 integration witness for the
\texttt{PaymentExecutor} boundary
\ifdefined\technicalreportonly
used by 402Pilot.
\else
described in \S\ref{sec:layer} of the main paper.
\fi
The witness demonstrates that a selected-provider purchase can execute as a
real HTTP~402 quote--pay--receipt loop. It is separate from
402Pilot-Bench and contributes no reported measurements.
The local setup consists of an Anvil fork containing USDC contract state, a
Python x402 facilitator for verifying and settling exact EVM payments, a
FastAPI resource server exposing x402-protected provider endpoints, and an
\texttt{X402PaymentExecutor} adapter that maps the HTTP~402 exchange to a
402Pilot \texttt{Outcome}.

\paragraph{Runtime environment.}
The reported benchmark experiments are deterministic, CPU-only frozen-replay
runs conducted on a MacBook Pro with an Apple M2~Max and 32\,GB unified
memory. The released artifact pins the software dependencies and exact
versions. The integration witness additionally requires a local Anvil
instance, the Python x402 facilitator, and the FastAPI resource server.
Running \texttt{bash scripts/witness\_smoke.sh} exercises both the unpaid
HTTP~402 response and a paid request, and verifies a valid payment receipt
together with the corresponding decrease in the buyer's local Anvil USDC
balance.

\FloatBarrier
\putbib[references]
\end{bibunit}

\end{document}